\title{Dictionary Subselection Using an Overcomplete Joint Sparsity Model}
\author{Mehrdad Yaghoobi$^\dagger$, Laurent Daudet$^\ddagger$, Michael E. Davies$^\dagger$ \thanks{This work is supported by EU FP7, FET-Open grant number 225913 and EPSRC grants EP/J015180/1 and EP/K014277/1. MED acknowledges support of his position from the Scottish Funding Council and their support of the Joint Research Institute with the Heriot-Watt University as a component part of the Edinburgh Research Partnership in Engineering and Mathematics. LD acknowledges partial support from LABEX WIFI  (Laboratory of Excellence within the French Program "Investments for the Future") under references ANR-10-LABX-24 and ANR-10-IDEX-0001-02 PSL*.}\\
 \smaller{ $^\dagger$ Institute for Digital Communications (IDCom), The University of Edinburgh, EH9 3JL, UK }
 \\ { $^\ddagger$ Paris Diderot University / IUF,  Institut Langevin, 1, rue Jussieu 75005 Paris, France.}
\\ { \{ m.yaghoobi-vaighan, mike.davies\}@ed.ac.uk, laurent.daudet@espci.fr}
}
\def\x{\mathbf x}
\def\y{\mathbf y}
\def\X{\mathbf X}
\def\Y{\mathbf Y}
\def\R{\mathbb R}
\def\D{\mathbf D}
\def\n{\mathbf n}
\def\P{\mathcal{P}}
\newtheorem{theorem}{\textit{Theorem}}
\newtheorem{remark}{\textit{Remark}}
\newtheorem{lem}{\textit{Lemma}}
\newtheorem{proposition}{\textit{Proposition}}
\newtheorem{corollary}{\textit{Corollary}}
\newcommand{\The}{\mathbf{\Theta}}
\newcommand{\norm}[1]{\|#1\|}
\begin{document}

\maketitle
\begin{abstract}
Many natural signals exhibit a sparse representation, whenever a suitable describing model is given. Here, a linear generative model is considered, where many sparsity-based signal processing techniques rely on such a simplified model. As this model is often unknown for many classes of the signals, we need to select such a model based on the domain knowledge or using some exemplar signals. This paper presents a new exemplar based approach for the linear model (called the {\em dictionary}) selection, for such sparse inverse problems. The problem of dictionary selection, which has also been called the {\em dictionary learning} in this setting, is first reformulated as a joint sparsity model. The joint sparsity model here differs from the standard joint sparsity model as it considers an overcompleteness in the representation of each signal, within the range of selected subspaces. The new dictionary selection paradigm is examined with some synthetic and realistic simulations.
\end{abstract}

\section{Introduction}

The sparse signal model is one the most successful low-dimensional signal models for modern signal processing applications \cite{Baraniuk10}. In this model, any considered signal $\y \in \R^m$, can be represented as the sum of a few elementary functions, called the \textit{atoms}, plus some noise $\n \in \R^m$, as follows,
\begin{equation*}
 \y = \D \x + \n,
\end{equation*}
 where $\D \in \R^{m \times p}$, called the \textit{dictionary}, is the collection of the atoms and $\x \in \R^p$ is a sparse vector. In this setting, $\y$ is often called a sparse signal in $\D$. The additive noise is used to consider the inaccuracy of the measurement device or the model mismatch. While choosing an overcomplete dictionary, \textit{i.e.} $p > m$, gives us a flexibility to choose sparser representation, the extra redundancy can be damaging in ducking failures coefficient recovery. Therefore, the success of sparse signal models depends on how well we choose a redundant $\D$, which is the main focus of this paper. 

There is a lot of interest in building redundant dictionaries to make more flexible models and various techniques have already been proposed to design the dictionary using some domain knowledge, see for example \cite{Yaghoobi09e}, or learning the dictionary using a given set of exemplars \cite{Olshausen97}, see \cite{Rubinstein10} and \cite{Tosic11} for a more complete review on different dictionary selection techniques. The advantage of the first approach is the possibility of incorporating already known signal structures and often fast implementation of the dictionary. The second approach does not need such prior information about the signals, but they often find an unstructured dictionary with a computationally expensive implementation. We will combine these two methods in this paper, by considering a large set of potentially good atoms $\mathbf{\Phi} \in \R^{m \times n}, \ n > p$, called a {\em mother dictionary} here, and selecting a smaller set of atoms as the final dictionary $\D$. Fast implementation 
of such dictionaries are guaranteed, if the mother dictionary has such a property. For instance, scalar products of a given signal $x$ with a family of Gabor atoms of length $m$ can be implemented with a computational complexity of $\mathcal{O}(m \log m)$. Also, as we restrict the search space to the dictionaries with mother atoms, it can be learned using much less exemplars. In other words, restricting the dictionary to a subset of mother atoms, regularises the dictionary learning problem and avoids overfitting.

As all the atoms of $\D$ exist in $\mathbf{\Phi}$, any sparse signal in $\D$, can be represented using $\mathbf{\Phi}$. The reader may ask, if we can use the large dictionary $\mathbf{\Phi}$, why we need to shrink it to find a dictionary which at best can only sparsify the signal to the same level. The answer to this question can be given by noting that, finding the sparse approximations have non-polynomial complexity, in a general setting. The success of practical sparse approximation algorithms depends on some internal structures of the dictionary, including mutual coherence \cite{Donoho01}, Restricted Isometry Property (RIP) \cite{Candes05b} or the null-space property \cite{Gribonval03}. Dictionary size indirectly affects these properties such that larger dictionaries mostly make the sparse recovery more difficult. Roughly speaking, it is caused by the fact that putting more atoms in the dictionary, the atoms become more similar. Such similarities between different atoms, indeed make it more challenging 
to find which set of atoms represents the signals more accurately, \textit{i.e} the problem of exact (support) recovery. The approximation in such large dictionaries would also be noise sensitive, as small noise may cause the wrong atoms to become selected.
Finally, in coding, the cost of indexing which atoms being used in the representation ($\x$), \textit{a.k.a.} the binary significance map, grows as the dictionary size increases.

\subsection{Related Work}\label{sec:relatedworks}

The problem of dictionary design by combining the atoms of a mother dictionary was considered in \cite{Neff02,Yaghoobi09b,Rubinstein10b}. In this setting, an auxiliary sparse matrix combines the mother atoms, to generate a dictionary which fits the given learning samples. The size of dictionary is fixed here and as the learned dictionary is 
the multiplication of a sparse matrix and a structured matrix (with a possibly fast multiplication), we can implement such a dictionary in two steps, where each of them are cheaper than $\mathcal{O}(p^2)$. The dictionary selection problem can be interpreted as a particular case of sparse dictionary learning, when the sparse matrix can have only  $p < n$ non-zero elements, with one non-zero on each row. 

The problem of learning a dictionary, when the size of dictionary is not given, has been investigated in \cite{Yaghoobi09d}. The dictionary selection problem has also a similar approach, by finding smaller size dictionaries from the given larger reference dictionaries. The difference is that the reference dictionary is fixed throughout the learning here, which allows us to handle significantly larger problems and find computationally fast dictionaries.

The dictionary selection, which will be considered in this paper, is also related to the problem of subset selection in machine learning \cite{Krause10,Das11}, where the goal is to select the most relevant subset, which describes the whole set. \cite{Krause10} uses the fact that such a model selection can be formulated as a submodular cost minimisation. For such a formulation, there exist some canonical solvers, which guarantee to find a neighbourhood solution. The derived neighbourhood is indeed not small, which motivated Das and Kempe \cite{Das11} to present an alternative submodular formulation to reduce the approximation error.

\subsection{Contributions}

We here choose a different path to the mentioned dictionary selection techniques in previous section, by reformulating the problem as a generalised form of joint sparse representation problem \cite{Malioutov03,Cotter05}. To the authors' knowledge, it is the first time that the dictionary selection problem is modelled in this way. In this model, representation of each signal is not only $p$-joint sparse, it is also $k$-sparse in the selected joint sparsity support. We here assume $p>m$, which makes the representation of each signal in the selected $p$-joint support, non unique, where $k$-sparsity constraint can help to find the correct representation. 

Based on the new signal model, we need to solve a quadratic objective. As the signal model and the objective include unbounded solutions, we need to investigate the conditions that the problem is \textit{well-defined}. Such an analysis is useful for the convergence study of any algorithm introduces to solve the problem. The boundedness and uniqueness of the solutions of the introduced optimisation problem are also introduced in this paper.

As the dictionary can be found using the active rows of the coefficient matrix of the introduced optimisation program, we need to practically solve a non-polynomial time complexity problem. We here introduce a technique, which is inspired from the iterative hard thresholding for sparse approximations \cite{Herrity06,Blumensath10}, to find such an active set of atoms. The algorithm is equipped with a line-search technique to guarantee the monotonic decrease of the (positive) objective. The convergence of the algorithm is also investigated in this paper. 

In the numerical tests provided in this paper, the new approach is shown to recover the exact dictionary, in a large range of sparsity/overcompleteness parameters.


\subsection{Paper Organisation}

We initially formulate the dictionary selection problem as an overcomplete joint sparse representation problem in Section \ref{sec:mathematicalmodelling}. We then introduce an iterative algorithm to solve the problem approximately in Section \ref{sec:dicoselection} and show some dictionary recovery results with synthetic data simulation in Section \ref{sec:simulations}. We also show some simulation results on the Curvelet sub-dictionary selection for the finger print data in this section. The paper will be concluded in Section \ref{sec:conclusion}.

\section{Mathematical Modeling} \label{sec:mathematicalmodelling}

Let $\mathbf{Y} = \left[\mathbf{y}_l\right]_{l \in [1,L]}$ be a matrix made by training samples $\mathbf{y}_l \in \mathbb{R}^{m}$ and $\mathbf{\Phi} = [\mathbf{\phi}_i]_{i \in \mathcal{I}}, |\mathcal{I}| = n$ be a mother dictionary of normalised atoms $\mathbf{\phi}_i \in \mathbb{R}^{m}$. We assume that the generative dictionary $\mathbf{D} \in \mathbb{R}^{m \times p}, \ m \le p$ is made by a subset selection of atoms in $\mathbf{\Phi}$, \textit{i.e.} $\mathbf{D} = \left[\mathbf{\phi}_i\right]_{i \in \mathcal{J}}$ where $\mathcal{J} \subset \mathcal{I}$ and $|\mathcal{J}| = p < n$. We assume that each $\mathbf{y}_l$ is {\em approximately} generated by a $k$-sparse coefficient vector $\mathbf{\gamma}_l$,
\begin{equation*}
\mathbf{y}_l \approx \mathbf{D} \mathbf{\gamma}_l.
\end{equation*}
We want to find a dictionary that fulfils the two (apparently contradictory) objectives : few elements in the dictionary, and sparsest decomposition for each signal. In other words $\D$ which is both {\em small} and {\em efficient}!
The problem of \textit{optimal dictionary selection} can thus be defined as finding the index set $\mathcal{J}$ meeting those criteria, given $\mathbf{Y}$, $\mathbf{\Phi}$, $p$ and $k$. Let $\mathbf{X} \in \mathbb{R}^{n \times L}$ be a coefficient matrix and $f_{\mathsmaller{\mathcal{J}}}(i): [1,p] \mapsto [1,n]$ be the mapping that assigns the corresponding atom index of $\mathbf{\Phi}$ to the $i^{th}$ component of $\gamma_l$. By assigning $\{{\mathbf{x}_l}\}_\mathsmaller{f_{\mathsmaller{\mathcal{J}}}(i)} \gets \{{\mathbf{\gamma}_l}\}_{i}, \ \forall i \in [1,p], \ \forall l \in [1,L]$, while the other elements of $\mathbf{X}$ are set to zero, the generative model can be reformulated as,

\begin{equation}\label{eq:genmodel}
 \mathbf{Y} \approx \mathbf{\Phi} \mathbf{X}.
\end{equation}
As $\mathbf{X}$ is $k$-sparse in each column and $p$-row-sparse, \text{i.e.} only $p$ rows of $\mathbf{X}$ have non-zero components, it lies in the intersection of the following sets,

\begin{equation} \label{eq:ksparse}
 \mathcal{K} := \left\{\The \in \mathbb{R}^{n \times L}: \ \|\theta_l\|_0 \le k, \forall l \in [1,L]\right\}
\end{equation}

\begin{equation} \label{eq:nsparse}
\mathcal{P} := \left \{\The \in \mathbb{R}^{n \times L}: \ \|\The\|_{0,\infty} \le p\right\}
\end{equation}
where $\|\The\|_{0,\infty} = \|\nu\|_0$, with $\{\nu\}_i := \|\theta^\mathsmaller{(i)}\|_{\infty}$ and $\theta^\mathsmaller{(i)}$ is the $i^{th}$ row of $\Theta$. In other words, sets $\mathcal{K}$ and $\mathcal{P}$ are the sets of $n$ by $L$ matrices which respectively have $k$ non-zero elements on each column and $p$ non zero rows. The signals which can be represented using a coefficient matrix in $\mathcal{K} \cap \mathcal{P}$, the $(k,p)$-(overcomplete) joint sparse signals or, we will simply say that they follow the $(k,p)$-(overcomplete) joint sparsity model. 
Here, we actually combine the coefficient matrix and the dictionary parameters, \textit{i.e.} the index set of optimal dictionary, in a single matrix $\X$, where the optimal atom indices are specified by the locations of non-zero rows of $\X$.

The optimal dictionary $\mathbf{D}$, which can alternatively be indicated by $\mathcal{J}$, is defined as the solution of the following problem,

\begin{equation}\label{eq:odsformulation}
 \min_{\The} \|\mathbf{Y} - \mathbf{\Phi} \The \|_{F}^2, \ \operatorname{s.t.} \ \The \in \mathcal{K} \cap \mathcal{P}.
\end{equation}
$\D$ can actually be found using the solution of (\ref{eq:odsformulation}), by selecting the atoms of $\mathbf{\Phi}$ which have been used at least once in the representation of $\Y$. 
This formulation has some similarities with the convex formulation of Friedman \textit{et al.} \cite{Friedman10}, where they combine the convex $\ell_1$ and $\ell_2$ penalties to promote an overcomplete joint sparsity model. The alternative formulation (\ref{eq:odsformulation}), used in this study, has the benefit of being directly related to the size-$p$ dictionary selection problem. Furthermore, the associated iterative algorithm, as presented in Section \ref{sec:dicoselection} offers a complexity that scales well with the dimension of the problem, that can be large in many practical problems.

\subsection{Boundedness and Uniqueness of the Solutions}

The constraint set $\mathcal{K} \cap \mathcal{P}$ is unbounded. This means that for any given finite value $t$, there exists at least a point $\mathbf{X} \in \mathcal{K} \cap \mathcal{P}$ such that $\max_i \norm{\X_i}_\infty > t$. It is necessary to find a condition which guarantees the boundedness of the solution of (\ref{eq:odsformulation}). Such a condition is given in Lemma \ref{lem:bounded}. To prove this lemma, we use the following proposition.

\begin{proposition}\label{prop:mindistance}
 Let $\mathcal{B}_{r}^\infty$ be an open ball centred at the origin, with the radius $r$, defined by $\mathcal{B}_{r}^\infty = \{\mathbf{A} \in \R^{m \times L}, \max_{i,j}|\mathbf{A}_{i,j}| < r\}$. For a given $\zeta \in \R_+$, if 
\begin{equation}\label{eq:boundedcondition}
\operatorname{Null}(\mathbf{\Phi}) \cap \mathcal{K} \cap \mathcal{P} = \{\mathbf{0}\},
\end{equation}
there exists a finite radius $r \in \R_+$ such that, $\forall \ \X_{KP} \in (\mathcal{K} \cap \mathcal{P}) \setminus \mathcal{B}_{r}^\infty$,
\begin{equation*}
 \min_{\X_N \in \operatorname{Null}(\mathbf{\Phi})} \norm{\X_{KP} - \X_N}_F > \zeta,
\end{equation*}

\end{proposition}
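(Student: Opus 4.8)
The plan is to exploit that both $\mathcal{K}\cap\mathcal{P}$ and $\operatorname{Null}(\mathbf{\Phi})$ are cones, so that the distance from a point to the null space is positively homogeneous, and thereby reduce the statement to a compactness argument on the unit sphere. Throughout I would work with the distance function
\begin{equation*}
 g(\The) := \min_{\X_N \in \operatorname{Null}(\mathbf{\Phi})} \norm{\The - \X_N}_F,
\end{equation*}
which is simply the Frobenius norm of the orthogonal projection of $\The$ onto the row space of $\mathbf{\Phi}$. Two elementary facts are all I need about $g$: it is continuous (the distance to a closed subspace is $1$-Lipschitz), and it is positively homogeneous, $g(\alpha\The)=\alpha\,g(\The)$ for every $\alpha\ge 0$, since scaling the null-space variable $\X_N$ by $\alpha$ leaves the feasible set unchanged.

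Next I would check that $\mathcal{K}\cap\mathcal{P}$ is a closed cone. It is a cone because multiplying a matrix by a scalar changes neither its per-column supports nor its set of nonzero rows, so membership in $\mathcal{K}$ and in $\mathcal{P}$ is preserved. It is closed because a limit of matrices having at most $k$ nonzeros per column (respectively, at most $p$ nonzero rows) cannot acquire extra support in the limit; indeed each set is a finite union of coordinate subspaces. Consequently the slice $\mathcal{S}:=(\mathcal{K}\cap\mathcal{P})\cap\{\The:\norm{\The}_F=1\}$ is compact, being the intersection of a closed set with the unit sphere.

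The crux of the argument is to minimise the continuous function $g$ over the compact set $\mathcal{S}$, which yields a minimiser $\The^\star$ with value $c:=g(\The^\star)\ge 0$, and then to show $c>0$. If instead $c=0$, then $\The^\star\in\operatorname{Null}(\mathbf{\Phi})$ while also $\The^\star\in\mathcal{K}\cap\mathcal{P}$, so that $\The^\star\in\operatorname{Null}(\mathbf{\Phi})\cap\mathcal{K}\cap\mathcal{P}$, which by hypothesis (\ref{eq:boundedcondition}) equals $\{\mathbf{0}\}$; this contradicts $\norm{\The^\star}_F=1$. This is the only place the standing assumption enters, and verifying it rigorously (together with the compactness that makes the minimum attained) is the main obstacle.

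Finally I would globalise by homogeneity: any nonzero $\The\in\mathcal{K}\cap\mathcal{P}$ can be written as $\norm{\The}_F\cdot(\The/\norm{\The}_F)$ with the normalised factor lying in $\mathcal{S}$, so
\begin{equation*}
 g(\The)=\norm{\The}_F\,g\!\left(\The/\norm{\The}_F\right)\ge c\,\norm{\The}_F.
\end{equation*}
Since the Frobenius norm dominates the entrywise $\infty$-norm, any $\X_{KP}$ lying outside $\mathcal{B}_r^\infty$ satisfies $\norm{\X_{KP}}_F\ge r$, whence $g(\X_{KP})\ge c\,r$. Choosing any radius $r>\zeta/c$ then forces $g(\X_{KP})>\zeta$ for all $\X_{KP}\in(\mathcal{K}\cap\mathcal{P})\setminus\mathcal{B}_r^\infty$, which is exactly the claimed bound.
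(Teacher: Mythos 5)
Your proof is correct and follows essentially the same route as the paper's: the distance from a nonzero point of $\mathcal{K}\cap\mathcal{P}$ to $\operatorname{Null}(\mathbf{\Phi})$ is positive and scales linearly under dilation, so points far from the origin are far from the null space. The one respect in which you improve on the paper is that you make the decisive step rigorous: the paper passes directly from pointwise positivity of the distance to the existence of a single radius $r$, whereas you obtain the required \emph{uniform} lower bound $c>0$ by minimising the continuous distance function over the compact unit-sphere slice of the closed cone $\mathcal{K}\cap\mathcal{P}$ (a finite union of coordinate subspaces), which is exactly the compactness argument the paper's wording implicitly relies on.
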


\begin{proof}
$\operatorname{Null}(\mathbf{\Phi})$ is a (linear) subspace of $\R^{m \times L}$ and $\mathcal{K} \cap \mathcal{P}$ is a union of subspaces \cite{Lu08}, which intersect at the origin. The shortest distance between a given non-zero point $\X_{KP}$ in $\mathcal{K} \cap \mathcal{P}$ and $\operatorname{Null}(\mathbf{\Phi})$ is non-zero, as $\mathbf{0}$ is the only point in $\operatorname{Null}(\mathbf{\Phi}) \cap \mathcal{K} \cap \mathcal{P}$. This distance becomes larger, if $\X_{KP}$ moves away from the origin with the formula $\alpha \, \X_{KP}$, for $\alpha > 1$. Therefore, there exists a radius $r$, which any point in $\mathcal{K} \cap \mathcal{P}$, located outside of $\mathcal{B}_{r}^\infty$, is at least $\zeta$ away from the closest point in $\operatorname{Null}(\mathbf{\Phi})$. 
\end{proof}

\begin{lem}\label{lem:bounded}
 Let the null space of the operator $\mathbf{\Phi}$, in the space $\R^{m \times L}$, be noted by $\mathcal{N}$. The solutions of (\ref{eq:odsformulation}) are bounded if and only if $\mathcal{N} \cap \mathcal{K} \cap \mathcal{P} = \{\mathbf{0}\} $.
\end{lem}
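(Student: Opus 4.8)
The plan is to prove the two implications separately, exploiting throughout a single observation: the objective $\norm{\Y-\mathbf{\Phi}\The}_F^2$ depends on $\The$ only through $\mathbf{\Phi}\The$, so it is constant along any direction lying in $\mathcal{N}=\operatorname{Null}(\mathbf{\Phi})$. The ``only if'' direction is then the easy one and I would dispatch it by contraposition, while the ``if'' direction is the substantive one and is exactly where Proposition~\ref{prop:mindistance} is used to convert a combinatorial geometry into a coercivity estimate.

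For the \emph{necessity} I assume $\mathcal{N}\cap\mathcal{K}\cap\mathcal{P}\neq\{\mathbf 0\}$ and exhibit an unbounded solution set. Take a nonzero $\The_0\in\mathcal{N}\cap\mathcal{K}\cap\mathcal{P}$. Because $\mathcal{K}\cap\mathcal{P}$ is a union of subspaces, hence a cone, the whole ray $\{t\,\The_0:t\in\R\}$ stays feasible and $\mathbf{\Phi}(t\,\The_0)=\mathbf 0$ for all $t$. Picking any feasible $\The_1$ whose row and column support is contained in that of $\The_0$, the perturbation $\The_1+t\,\The_0$ never leaves the single constituent subspace carrying $\The_0$, so it remains in $\mathcal{K}\cap\mathcal{P}$, while $\norm{\Y-\mathbf{\Phi}(\The_1+t\,\The_0)}_F=\norm{\Y-\mathbf{\Phi}\The_1}_F$ is independent of $t$. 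Choosing the data $\Y=\mathbf{\Phi}\The_1$ (for instance $\The_1=\mathbf 0$, $\Y=\mathbf 0$) makes the residual vanish, so the entire unbounded ray consists of global minimisers and the solution set is not bounded. The delicate point here is the support bookkeeping: one may \emph{not} add $\The_0$ to an arbitrary minimiser, since the union of the two supports can violate the $k$- and $p$-constraints, and confining the perturbation to the subspace already containing $\The_0$ is what keeps every iterate feasible.

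For the \emph{sufficiency} I assume $\mathcal{N}\cap\mathcal{K}\cap\mathcal{P}=\{\mathbf 0\}$ and show the objective is coercive on $\mathcal{K}\cap\mathcal{P}$. First, $\The=\mathbf 0$ is feasible, so the optimal value is at most $\norm{\Y}_F^2$, and every minimiser obeys $\norm{\Y-\mathbf{\Phi}\The}_F\le\norm{\Y}_F$. Next I relate $\norm{\mathbf{\Phi}\The}_F$ to the distance of $\The$ from $\mathcal{N}$: splitting $\The=\The_N+\The_\perp$ with $\The_N$ the orthogonal projection onto the subspace $\mathcal{N}$ and $\The_\perp\in\mathcal{N}^\perp$, one has $\mathbf{\Phi}\The=\mathbf{\Phi}\The_\perp$ and hence $\norm{\mathbf{\Phi}\The}_F\ge\sigma\,\norm{\The_\perp}_F=\sigma\,\min_{\X_N\in\mathcal{N}}\norm{\The-\X_N}_F$, where $\sigma>0$ is the smallest nonzero singular value of $\mathbf{\Phi}$ (strictly positive because $\mathbf{\Phi}$ is injective on $\mathcal{N}^\perp$).

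Finally I combine these with Proposition~\ref{prop:mindistance}. Setting $\zeta:=2\norm{\Y}_F/\sigma$ produces a finite radius $r$ such that every feasible $\The\notin\mathcal{B}_r^\infty$ satisfies $\min_{\X_N\in\mathcal{N}}\norm{\The-\X_N}_F>\zeta$, whence $\norm{\mathbf{\Phi}\The}_F>\sigma\zeta=2\norm{\Y}_F$, and by the reverse triangle inequality $\norm{\Y-\mathbf{\Phi}\The}_F\ge\norm{\mathbf{\Phi}\The}_F-\norm{\Y}_F>\norm{\Y}_F$. Such a $\The$ has strictly larger residual than $\The=\mathbf 0$ and so cannot be optimal; every minimiser therefore lies inside $\mathcal{B}_r^\infty$, giving boundedness. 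I expect the main obstacle to be precisely the interface between the continuous estimate (the singular-value/coercivity bound) and the nonconvex, union-of-subspaces structure of $\mathcal{K}\cap\mathcal{P}$ --- which is what Proposition~\ref{prop:mindistance} is built to handle --- together with ordering the quantifiers on $\zeta$ and $r$ correctly so that the proposition can be applied.
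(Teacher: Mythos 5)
Your proof is correct, and while the necessity half is essentially the paper's argument (a nonzero $\Delta\in\mathcal{N}\cap\mathcal{K}\cap\mathcal{P}$ generates an unbounded ray of minimisers along which the objective is constant --- you are in fact more careful than the paper about the support bookkeeping needed to keep $\The_1+t\The_0$ feasible), the sufficiency half is organised genuinely differently. The paper fixes a putative minimiser $\X$, splits it orthogonally into null-space and range components, bounds $\norm{\X_{\mathcal{R}}}_F$ via $\sigma_{\min}$, then further splits by the support $\Lambda$ versus $\bar{\Lambda}$, bounds $\X_{\mathcal{N}_{\bar{\Lambda}}}$ through the identity $\X_{\mathcal{N}_{\bar{\Lambda}}}=-\X_{\mathcal{R}_{\bar{\Lambda}}}$, and finally rules out an unbounded $\X_{\mathcal{N}_{\Lambda}}$ by a contradiction with Proposition~\ref{prop:mindistance}. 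You instead prove a coercivity statement over the entire feasible set: with $\zeta=2\norm{\Y}_F/\sigma$, Proposition~\ref{prop:mindistance} plus the singular-value bound $\norm{\mathbf{\Phi}\The}_F\ge\sigma\,\mathrm{dist}(\The,\mathcal{N})$ forces every feasible $\The$ outside $\mathcal{B}_r^\infty$ to have objective strictly larger than the feasible point $\mathbf{0}$, so no minimiser can live there. Both routes use the same two ingredients (Proposition~\ref{prop:mindistance} and the smallest nonzero singular value), but yours avoids the support decomposition entirely, is shorter, and --- since it only ever uses ``objective no larger than at $\mathbf{0}$'' rather than optimality --- delivers Corollary~\ref{corol:bounded} directly, which the paper has to point out separately in a remark after its proof.
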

\begin{proof}
 Let $\X$ be a solution of (\ref{eq:odsformulation}) and $\X_\mathcal{N}$ and $\X_{\mathcal{R}}$ respectively be the projection of $\X$ onto the null-space and range of $\mathbf{\Phi}$. As $\|\X\|_{F}^2 = \|\X_\mathcal{N}\|_{F}^2 + \|\X_{\mathcal{R}}\|_{F}^2$, we only need to show that $ \|\X_\mathcal{N}\|_F$ and $ \|\X_{\mathcal{R}}\|_F$ are bounded for any solution of (\ref{eq:odsformulation})\footnote{We here show that Frobenius norm of $\X$ is bounded, which induces the boundedness of $\max_i \norm{X_i}_\infty$.}. As the matrix $\mathbf{0} \in\mathcal{K} \cap \mathcal{P}$, any solution of (\ref{eq:odsformulation}) should then have smaller objective than this matrix. We can then have,
\begin{equation*}
 \begin{split}
  2 \|\Y\|_{F} 	&\ge \|\Y\|_{F} + \|\Y - \mathbf{\Phi} \X\|_{F} \\
		&\ge \|\mathbf{\Phi} \X \|_F \\
		&\ge \sigma_{min} \|\X_{\mathcal{R}}\|_F,
 \end{split}
\end{equation*}
where $\sigma_{min}$ is the minimum (non-zero) singular value of $\mathbf{\Phi}$. This induces $\|\X_{\mathcal{R}}\|_F \le 2 \sigma_{min}^{-1} \|\Y\|_F$, which is the boundedness of $\|\X_{\mathcal{R}}\|_F$. 

We respectively denote $\Lambda$ and $\bar{\Lambda}$ as the support index of $\X$, \textit{i.e.} $\X_{\lambda} \ne 0, \ \lambda \in \Lambda$, and its complement. The matrix $\mathbf{A}_{\Lambda}$ (respectively $\mathbf{A}_{\bar{\Lambda}}$) is a matrix which is equal to $\mathbf{A}$ on the support index (respectively on the complement of support index) and zero on the other indices. The solution $\X$ is zero on the indices specified by $\bar{\Lambda}$, \textit{i.e.} $\X_{\bar{\Lambda}} = \mathbf{0}$. $\X_{\bar{\Lambda}} = \X_{\mathcal{N}_{\bar{\Lambda}}} + \X_{\mathcal{R}_{\bar{\Lambda}}} = \mathbf{0}$, shows that $\X_{\mathcal{N}_{\bar{\Lambda}}} = - \X_{\mathcal{R}_{\bar{\Lambda}}}$. On the other hand, 
\begin{equation*}
\begin{split}
 \|\X_{\mathcal{R}_{\bar{\Lambda}}} \|_{F}^2 	&= \|\X_{\mathcal{R}}\|_{F}^2 - \| \X_{\mathcal{R}_{\Lambda}} \|_{F}^2 \\
						&\le 4 \sigma_{min}^{-2} \|\Y\|_{F}^2,
\end{split}
\end{equation*}
which assures the boundedness of $\X_{\mathcal{N}_{\bar{\Lambda}}}$. We finally need to show that $\X_{\mathcal{N}_{\Lambda}}$ is also bounded. Momentarily assume that $\X_{\mathcal{N}_{\Lambda}}$ is unbounded. $\X_{\Lambda} = \X_{\mathcal{R}_{\Lambda}} + \X_{\mathcal{N}_{\Lambda}}$ is in $\mathcal{K} \cap \mathcal{P}$ and $\X_{\mathcal{N}} \in \mathcal{N}$. As $\X_{\Lambda}$ is unbounded when $\X_{\mathcal{N}_{\Lambda}}$ is unbounded, we can use Proposition \ref{prop:mindistance} with $\zeta = \left( \norm{\X_{\mathcal{R}_{\Lambda}}}_{F}^2 + \norm{\X_{\mathcal{N}_{\bar{\Lambda}}}}_{F}^2\right)^\frac{1}{2}$ as follows,
\begin{equation*}
\begin{split}
\zeta^2 	<& \ \norm{\X_{\Lambda} - \X_{\mathcal{N}}}_{F}^2 \\
	=& \ \norm{\left(\X_{\mathcal{R}_{\Lambda}} + \X_{\mathcal{N}_{\Lambda}}\right) - \left(\X_{\mathcal{N}_{\bar{\Lambda}}} + \X_{\mathcal{N}_{\Lambda}} \right) }_{F}^2\\
	=& \ \norm{\X_{\mathcal{R}_{\Lambda}} - \X_{\mathcal{N}_{\bar{\Lambda}}}}_{F}^2 \\
	=& \ \norm{\X_{\mathcal{R}_{\Lambda}}}_{F}^2 + \norm{\X_{\mathcal{N}_{\bar{\Lambda}}}}_{F}^2,
\end{split}
\end{equation*}
which contradicts with the fact that $\zeta^2 = \norm{\X_{\mathcal{R}_{\Lambda}}}_{F}^2 + \norm{\X_{\mathcal{N}_{\bar{\Lambda}}}}_{F}^2$. Therefore the assumption of unboundedness of $\X_{\mathcal{N}_{\Lambda}}$ is incorrect, which complete the proof of boundedness of $\X$.

If (\ref{eq:boundedcondition}) is not valid, we have a non-zero $\Delta \in \R^{m \times L}$ in the null space of $\mathbf{\Phi}$, which is also overcomplete joint sparse. This means that any non-zero $(k,p)$-joint sparse solution $\X$, with the same support as $\Delta$, can generate another solution of (\ref{eq:odsformulation}) by $\X + \lambda \Delta, \ \lambda \in \R$. By tending $\lambda$ to infinity, such a solution would be unbounded, which shows the necessity of $\mathcal{N} \cap \mathcal{K} \cap \mathcal{P} = \{\mathbf{0}\}$.
\end{proof}

It is generally difficult to check (\ref{eq:boundedcondition}) for a given mother dictionary. However, if the mother dictionary is in a general position, when the dimension of signal space $nL$ is larger than the sum of the dimension of null space $L(n-m)$ and each subspace $kL$, which means $k<m$, the Lebesgue measure of the lhs of (\ref{eq:boundedcondition}) is zero.

Although this lemma shows the boundedness of the solutions, it does not provide any explicit bound for the results. It means that if the $\operatorname{Null}(\mathbf{\Phi})$ subspace is very close to one of the subspaces in $\mathcal{K} \cap \mathcal{P}$, $\zeta$ can become very large. 

The reader may noticed that we did not use the {\em optimality} of $\X$, in the proof of Lemma \ref{lem:bounded}. Instead, we used the fact that the objective at $\X$ is less than the objective at $\mathbf{0}$. Therefore we can easily extend this lemma, to derive the boundedness of the search space.

\begin{corollary}\label{corol:bounded}
 The set $\{\The \in \mathcal{K} \cap \mathcal{P}, \ \norm{\Y-\mathbf{\Phi}\The}_F \le  \norm{\Y}_F\}$ is bounded if (\ref{eq:boundedcondition}) is true.
\end{corollary}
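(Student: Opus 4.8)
The plan is to recognize that the set in the statement is exactly the sublevel set of the objective in (\ref{eq:odsformulation}) at the value $\norm{\Y}_F$, and that the proof of Lemma \ref{lem:bounded} already bounds \emph{every} member of this sublevel set rather than only the minimizers. The crucial observation, flagged in the remark just before the corollary, is that optimality of $\X$ was never used in that proof: the single consequence of optimality invoked there was the inequality $\norm{\Y-\mathbf{\Phi}\X}_F \le \norm{\Y}_F$, obtained by comparison with the feasible point $\mathbf{0} \in \mathcal{K}\cap\mathcal{P}$. But this inequality, together with membership in $\mathcal{K}\cap\mathcal{P}$, is precisely the defining condition of the set $\mathcal{S}:=\{\The\in\mathcal{K}\cap\mathcal{P}: \norm{\Y-\mathbf{\Phi}\The}_F\le\norm{\Y}_F\}$. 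So the proof should amount to replaying the Lemma with ``solution of (\ref{eq:odsformulation})'' systematically replaced by ``element of $\mathcal{S}$''.

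Concretely, I would first fix an arbitrary $\The\in\mathcal{S}$ and split it as $\The=\The_{\mathcal{N}}+\The_{\mathcal{R}}$ into its null-space and range components. The range part is controlled by rerunning the triangle-inequality chain $\norm{\mathbf{\Phi}\The}_F\le\norm{\Y}_F+\norm{\Y-\mathbf{\Phi}\The}_F\le 2\norm{\Y}_F$ together with $\norm{\mathbf{\Phi}\The}_F\ge\sigma_{min}\norm{\The_{\mathcal{R}}}_F$, which uses only the defining inequality of $\mathcal{S}$ and yields $\norm{\The_{\mathcal{R}}}_F\le 2\sigma_{min}^{-1}\norm{\Y}_F$ for \emph{every} $\The\in\mathcal{S}$. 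I would then bound the null-space component by repeating the support argument verbatim: writing $\Lambda$ for the support of $\The$, the identity $\The_{\bar\Lambda}=\mathbf{0}$ forces $\The_{\mathcal{N}_{\bar\Lambda}}=-\The_{\mathcal{R}_{\bar\Lambda}}$, so $\The_{\mathcal{N}_{\bar\Lambda}}$ inherits the same range bound, while boundedness of $\The_{\mathcal{N}_{\Lambda}}$ follows from the same contradiction with Proposition \ref{prop:mindistance} under condition (\ref{eq:boundedcondition}). Since none of these steps touches optimality, the conclusion is uniform boundedness of $\mathcal{S}$.

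The one point requiring care, and what I regard as the main (mild) obstacle, is \emph{uniformity}: I must check that the radius supplied by Proposition \ref{prop:mindistance} can be chosen independently of $\The$. This is where the contradiction argument uses $\zeta=\bigl(\norm{\The_{\mathcal{R}_\Lambda}}_F^2+\norm{\The_{\mathcal{N}_{\bar\Lambda}}}_F^2\bigr)^{1/2}$, and the key is that both summands are already bounded uniformly over $\mathcal{S}$ by the range estimate above; hence $\zeta$ admits a uniform upper bound and Proposition \ref{prop:mindistance} furnishes a single finite $r$ valid for all $\The\in\mathcal{S}$. Having verified this, I would simply state that the argument of Lemma \ref{lem:bounded} carries over unchanged, the sole appearance of optimality having been the inequality that now holds by the very definition of the set, so $\mathcal{S}$ is bounded whenever (\ref{eq:boundedcondition}) holds.
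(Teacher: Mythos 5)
Your proposal is correct and matches the paper's own argument: the paper justifies this corollary precisely by the remark that the proof of Lemma \ref{lem:bounded} never used optimality of $\X$ but only the inequality $\norm{\Y-\mathbf{\Phi}\X}_F \le \norm{\Y}_F$, so the same chain of estimates applies verbatim to every element of the sublevel set $\{\The \in \mathcal{K}\cap\mathcal{P} : \norm{\Y-\mathbf{\Phi}\The}_F \le \norm{\Y}_F\}$. Your additional check that the radius supplied by Proposition \ref{prop:mindistance} can be chosen uniformly, because $\zeta$ admits a uniform upper bound over the set, is a welcome refinement of a point the paper leaves implicit.
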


It is always useful to know when an optimisation problem like (\ref{eq:odsformulation}), has a unique solution. This is particularly useful in the dictionary design problem, as the other formulations has often multiple solutions. This is caused by the fact that any permutation of a dictionary is also a solution for the problem. This indeed makes the convexification of the problem much more challenging.

We can use a general theorem of the Union of Subspaces (UoS) model to show the injection of the mapping $\Phi$. \cite[Theorem 2.6]{Blumensath09a}, with our settings,  shows that if $k<m$, almost all linear maps $\Phi_d = \operatorname{diag}\{\Phi\} \in \R^{mL \times nL}, \ \Phi \in \R^{m \times n}$ are one to one on almost all elements of the $(k,p)$-joint sparse matrices. $\Phi_d$ is a diagonal matrix with $\Phi$ on the main diagonal. Interested readers may notice that the derived condition, \textit{i.e.} $k<m$, is indeed the sufficient condition for the lhs of (\ref{eq:boundedcondition}) to have zero measure.


We now derive a sufficient condition for the uniqueness of the solution in a deterministic sense. It is indeed a particular case of the uniqueness results for the UoS model \cite{Lu08}.

\begin{lem}\label{lem:uniqueness}
 Let $k \le \frac{m}{2}, \ p \le \frac{n}{2}$, $\mathcal{N} = \operatorname{Null}\{\mathbf{\Phi}\}$ and $\mathcal{K}_{2k} := \left\{\mathbf{X} \in \mathbb{C}^{m \times L}: \ \|\mathbf{x}_l\|_0 \le 2k, \forall l \in [1,L]\right\}$ and $\mathcal{P}_{2p} := \left \{\mathbf{X} \in \mathbb{C}^{m \times L}: \ \|\mathbf{X}\|_{0,\infty} \le 2p\right\}$ The optimisation problem (\ref{eq:odsformulation}) has a unique solution if 
\begin{equation}\label{eq:uniquenesscond}
\mathcal{N} \cap \mathcal{K}_{2k} \cap \mathcal{P}_{2p} = \{\mathbf{0}\}  
\end{equation}
\end{lem}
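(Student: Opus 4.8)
The plan is to reduce uniqueness of the minimiser to injectivity of $\mathbf{\Phi}$ on the union of subspaces $\mathcal{K} \cap \mathcal{P}$, in the spirit of the UoS recovery results of \cite{Lu08}. Suppose, towards a contradiction, that (\ref{eq:odsformulation}) admits two distinct optimal solutions $\X_1, \X_2 \in \mathcal{K} \cap \mathcal{P}$, and set $\Delta := \X_1 - \X_2 \neq \mathbf{0}$. The first step is to show that $\Delta$ lies in the enlarged union $\mathcal{K}_{2k} \cap \mathcal{P}_{2p}$ by a support count: column-wise, $\norm{(\X_1)_l - (\X_2)_l}_0 \le \norm{(\X_1)_l}_0 + \norm{(\X_2)_l}_0 \le 2k$ gives $\Delta \in \mathcal{K}_{2k}$, while the nonzero rows of $\Delta$ are contained in the union of the (at most $p$) nonzero rows of $\X_1$ and of $\X_2$, so $\norm{\Delta}_{0,\infty} \le 2p$ and $\Delta \in \mathcal{P}_{2p}$. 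This is the routine but essential counting step, and it is exactly where the doubled levels $2k,2p$, and hence the hypotheses $k \le m/2$ and $p \le n/2$ (which keep $\mathcal{K}_{2k}$ and $\mathcal{P}_{2p}$ genuine, non-degenerate constraints), enter.

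The second step compares the images. If $\mathbf{\Phi}\X_1 = \mathbf{\Phi}\X_2$, then $\Delta \in \mathcal{N}$, whence $\Delta \in \mathcal{N} \cap \mathcal{K}_{2k} \cap \mathcal{P}_{2p} = \{\mathbf{0}\}$ by (\ref{eq:uniquenesscond}), contradicting $\Delta \neq \mathbf{0}$. Equivalently, condition (\ref{eq:uniquenesscond}) renders $\mathbf{\Phi}$ injective on $\mathcal{K} \cap \mathcal{P}$, so two feasible points sharing the same image must coincide. In the consistent regime, where $\Y = \mathbf{\Phi}\X_\star$ for some $\X_\star \in \mathcal{K} \cap \mathcal{P}$, the optimal objective vanishes and necessarily $\mathbf{\Phi}\X_1 = \Y = \mathbf{\Phi}\X_2$, so this single case already closes the argument and yields $\X_1 = \X_2$.

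The remaining, and \textbf{main obstacle}, is to rule out $\mathbf{\Phi}\X_1 \neq \mathbf{\Phi}\X_2$ in the genuinely noisy case: then $\mathbf{\Phi}\X_1$ and $\mathbf{\Phi}\X_2$ are two distinct nearest points to $\Y$ inside $\mathbf{\Phi}(\mathcal{K} \cap \mathcal{P})$, which is again a non-convex union of subspaces, and nearest points to such a set need not be unique, so injectivity alone does not immediately suffice. I would resolve this by observing that the set of data $\Y$ that is equidistant from two distinct image subspaces has zero Lebesgue measure, so for all but a negligible set of $\Y$ the minimising image is unique and then, by the injectivity established above, so is the minimiser $\X$; this matches the reading of the statement as a \emph{deterministic sufficient condition} on $\mathbf{\Phi}$ and mirrors the general-position discussion following Lemma \ref{lem:bounded}. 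The substantive content of the lemma is therefore the injectivity conferred by (\ref{eq:uniquenesscond}), which the support count of the first step converts into the claimed uniqueness of the solution of (\ref{eq:odsformulation}).
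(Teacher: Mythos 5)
Your first two steps are precisely the paper's own proof: the authors take two distinct solutions $\X_1,\X_2$, assert $\mathbf{\Phi}\X_1=\mathbf{\Phi}\X_2=\Y$, note that $\X_1-\X_2\in\mathcal{N}\cap\mathcal{K}_{2k}\cap\mathcal{P}_{2p}$ (the support count you spell out is left implicit there), and invoke (\ref{eq:uniquenesscond}) to conclude $\X_1=\X_2$. So on the part the paper actually argues, you take the same route, just with the counting made explicit.

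Where you genuinely depart is in flagging the "main obstacle": the paper's line $\mathbf{\Phi}\X_1=\mathbf{\Phi}\X_2=\Y$ silently assumes the consistent case in which the optimal objective is zero, and the published proof never addresses the situation where $\Y\notin\mathbf{\Phi}(\mathcal{K}\cap\mathcal{P})$. Your observation is correct: the feasible set is a finite union of subspaces, condition (\ref{eq:uniquenesscond}) makes $\mathbf{\Phi}$ injective on it, so the minimiser over each individual subspace is unique, but a generic $\Y$ equidistant from two distinct image subspaces would yield two distinct global minimisers, and injectivity alone cannot exclude this. That is a real gap in the lemma as stated (or at least in its proof), not a defect of your argument. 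Be aware, however, that your proposed patch does not actually close it in the deterministic sense the lemma claims: the measure-zero set of "bad" data is a set of exceptional $\Y$, not of exceptional $\mathbf{\Phi}$, so what you obtain is uniqueness for almost every $\Y$ given a fixed $\mathbf{\Phi}$ satisfying (\ref{eq:uniquenesscond}) --- a weaker statement than "the problem has a unique solution." The honest conclusions are either (i) the lemma holds as stated when $\Y$ is exactly representable (the regime the paper's proof covers), or (ii) in the noisy regime one must either accept the almost-every-$\Y$ qualification or impose a further condition on $\Y$ (e.g.\ that the closest point of $\mathbf{\Phi}(\mathcal{K}\cap\mathcal{P})$ to $\Y$ is attained on a single subspace). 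One last nit: the hypotheses $k\le m/2$, $p\le n/2$ do not enter the support count itself, which holds unconditionally; they only ensure that $\mathcal{K}_{2k}$ and $\mathcal{P}_{2p}$ remain proper constraints so that (\ref{eq:uniquenesscond}) is not vacuously unsatisfiable.
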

\begin{proof}
Let the solution not be unique and we have $\X_1$ and $\X_2$ as two distinctive solutions of (\ref{eq:odsformulation}). We have $\mathbf{\Phi} \X_1 = \mathbf{\Phi} \X_1 = \Y$, which means $\mathbf{\Phi} (\X_1 - \X_2) = \mathbf{0}$. As $\X_1 - \X_2 \in \mathcal{K}_{2k} \cap \mathcal{P}_{2p}$ and  $\X_1 - \X_2 \in \mathcal{N}$, it should be $\mathbf{0}$, which gives $\X_1 = \X_2$ and it contradicts with the fact that they are distinctive solutions.
\end{proof}

\begin{remark}
 Note that Lemma \ref{lem:uniqueness} presents a {\em sufficient} condition for the uniqueness of the solution, which is different to the standard k-sparse and k-joint sparse UoS models. Similar to the general form of block-sparse model, this is caused by the fact that some of the sparsity patterns in $\mathcal{K}_{2k} \cap \mathcal{P}_{2p}$ can not be divided to two disjoint  sparsity patterns in  $\mathcal{K} \cap \mathcal{P}$.
\end{remark}

\begin{remark}
The boundedness of the solutions of (\ref{eq:odsformulation}) needs a weaker condition than its uniqueness. We can actually use the uniqueness condition of Lemma \ref{lem:uniqueness} to show the boundedness of the solution.
\end{remark}

\begin{figure}[t]
\centering
\includegraphics[width=8.5 cm]{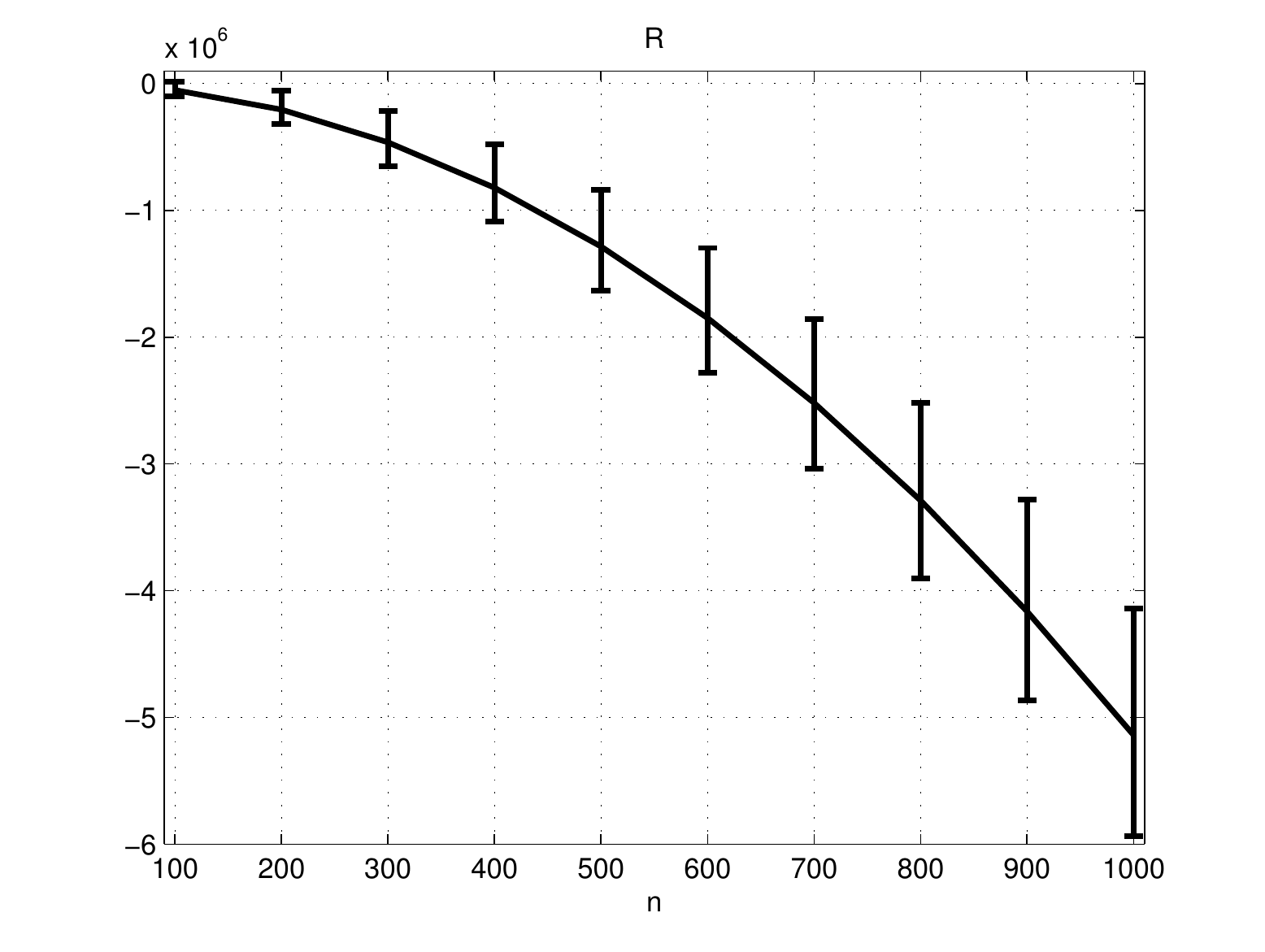}
\caption{$\mathcal{R}$ for different $n$, when $\delta = \frac{1}{4}$, $\rho = \frac{1}{10}$ and $t = 100$. $f$ is shown by the solid line and the bounds for $\mathcal{R}$ are shown with error bars, see (\ref{eq:R}).}
\label{numberofsubspaces}
\end{figure}

\subsection{Number of Subspaces}

It was mentioned that the introduced signal model is a UoS model, as fixing the support coefficient, generates a low-dimensional subspace of the $\R^{m \times L}$. We are restricting the set of matrices which are $k$-sparse on each column, to the matrices which are also $p$-joint sparse. Such a restriction reduces the number of admissible subspaces, which increases the robustness of the mapping $\Phi$ on its domains. In practical applications, we need some robustness to the noise and model mismatches for a successful sparse recovery. This is indirectly related to the distance between each two distinct points, after mapping. If two points have some small distance after the mapping, the embedding is sensitive to the noise. A measure which characterises such a robustness is the restricted isometry constant $\delta$ for each UoS model \cite{Candes05b}: a large $\delta$ ensures a more robust embedding. We refer the readers to \cite{Candes05b} for more information about the definition and implication of the restricted isometry constant.

Based upon \cite[Corollary 3.6]{Blumensath09a}, a necessary number of measurements to have a robust embedding with a particular $\delta$, has a lower bound, which is proportional to $\ln(N_s)$ and inversely proportional to $c_\mathsmaller{\delta} - \ln(\Delta_s)$, where $N_s$ is the total number of subspaces, $c_\mathsmaller{\delta}$ is a function of $\delta$ and $\Delta_s$ is the subspace separation of the proposed UoS \cite[Eq. (18)]{Blumensath09a}. $\Delta_s$ decreases by restricting the UoS to a subset of the original UoS. We therefore reduce the necessary number of training samples in this context, by decreasing $N_s$. In the following, we characterise the reduction in the number of subspaces, using the proposed UoS model, in the comparison with the $k$-sparse signal model. 

When the matrix is $k$-sparse on rows, we have $L$ times ${n \choose k}$ options to choose the support. The number of subspaces is thus ${n \choose k}^L$. If we also restrict the matrices to be $p$-joint sparse, we choose $k$ positions for each row, within the selected $p$ rows. We have therefore ${p \choose k}^L {n \choose p}$ subspaces. To quantify exponential reduction in the number of subspaces using the $(k,p)$-joint sparsity model, we approximate $\mathcal{R}$, defined as,
\begin{equation}\label{eq:subspacereduction}
\mathcal{R} := \log_2 \frac{{p \choose k}^L {n \choose p}}{{n \choose k}^L}.
\end{equation}
To find some upper and lower bounds for $\mathcal{R}$, we use the concept of {\em binary entropy} $\mathcal{H}$ from Information Theory, which is defined as follows,
\begin{equation}
 \mathcal{H}(\tau) \triangleq -\tau \log_2(\tau) - (1-\tau)\log_2(1-\tau) 
\end{equation}
where $0 \le \tau \le 1$ is the probability of a binary number. We can now bound ${n \choose k}$ as follows \cite[eq. 12.40]{Cover91},
\begin{equation}
 \frac{1}{n+1} 2^{n \mathcal{H}\left(\frac{k}{n}\right)} \le {n \choose k} \le 2^{n \mathcal{H}\left(\frac{k}{n}\right)}.
\end{equation}
Using the similar bounds for ${p \choose k}$ and ${n \choose p}$, and after some simple algebraic manipulations, we can derive a bound for $\mathcal{R}$ as follows,
\begin{equation}\label{eq:R}
\begin{split}
 -L \log_2 (p+1)  -& \log_2 (n+1) + f(k,p,n,L) \\ 
 &\le \mathcal{R} \le L\log_2 (n+1) + f(k,p,n,L),
\end{split}
\end{equation}
where $f(k,p,n,L) \triangleq n \mathcal{H}\left(\frac{p}{n}\right) + Lp\mathcal{H}\left(\frac{k}{p}\right) - nL\mathcal{H}\left(\frac{k}{n}\right)$
If we replace the binary entropy in $f(k,p,n,L)$, we can derive an explicit formulation for $f$ as follows,
\begin{equation}
\begin{split}
 f(k,p&,n,L) = \ \left( p \log_2 \frac{n}{n-p} + p \log_2 \frac{n-p}{p}\right)\\
 		&-L \left( n \log_2 \frac{n}{n-k} - p \log_2 \frac{p}{p-k} + k \log_2 \frac{n-k}{p-k} \right).
\end{split} 
\end{equation}

As (\ref{eq:R}) depends on many parameters, it is hard to figure out the reduction in the number of subspaces form $\mathcal{R}$. To demonstrate this better, we can fix $\delta = \frac{p}{n}$, $\beta = \frac{k}{p}$ and $t = \frac{L}{n}$, and plot $f$ based on $n$, which is an approximation for $\mathcal{R}$, and showing the bounds of (\ref{eq:R}) with some error bars. If we choose $\delta = \frac{1}{4}$, $\beta = \frac{1}{10}$ and $t = 100$, the bounds for $\mathcal{R}$ are plotted as functions of $n$ in Figure \ref{numberofsubspaces}. As $2^\mathcal{R}$ is the ratio between the number of subspaces in the new model and $k$-sparsity model, we can see that ratio is significantly reduced for large $n$. In other words, the search space for the solution is now much smaller, which may boost the exact recovery using practical recovery algorithms, as we can see in the simulation section.

\section{A Practical Optimisation Algorithm} \label{sec:dicoselection}

Although the objective of (\ref{eq:odsformulation}) is quadratic, the optimisation of (\ref{eq:odsformulation}) subject to the non-convex constraints $\mathcal{K}$ and $\mathcal{P}$, is not easy. Most of the efficient optimisation techniques can not be used in this setting. A powerful technique, called the projected gradient, can be used when the projection onto the admissible set is available. In the space of real matrices $\mathbb{R}^{n \times L}$, the projection of a point $\mathbf{X} \in \mathbb{R}^{n \times L}$ onto a closed set $\kappa \subseteq \mathbb{R}^{n \times L}$ is defined by $\mathcal{P}_{\kappa}(\mathbf{X}) := \arg\min_{\mathbf{\Theta} \in \kappa} \|\mathbf{\Theta} - \mathbf{X} \|_{\chi}$, where $\|\cdot\|_\chi$ is the norm of the proposed space. We use the Hilbert - Schmidt, or Frobenius, norm here, as it is more related to the quadratic objective (\ref{eq:nsparse}), \textit{i.e.} using the same normed space, and we can analytically find the projection. In this setting, a projection onto $\
mathcal{K}$ can be found by keeping the $k$ largest coefficients of each column and letting the others be zero. The projection onto $\mathcal{P}$ can be found by keeping the $p$ rows of $\mathbf{X}$ with the largest maximum absolute values and letting the other rows be zero. Sadly, the projection onto the intersection of $\mathcal{K}$ and $\mathcal{P}$ is not analytically possible, the projected gradient algorithm can not be used in its canonical form. A property of the admissible sets $\mathcal{K}$ and $\mathcal{P}$ is that the consequent projections of a point in these sets provide a point in the intersection of them, which may indeed not necessarily be the projection onto $\mathcal{K} \cap \mathcal{P}$. The following lemma shows that alternating projection onto $\mathcal{K}$ and $\mathcal{P}$ converges in a single alternating projections.

\begin{lem}
Let $\X$ be a bounded matrix in $\R^{n \times L}$. The following two statements hold,
\begin{equation}\label{eq:projection}
\begin{split}
 \P_{\mathcal{P}} \P_{\mathcal{K}} (\X) \in& \  \mathcal{P} \cap \mathcal{K} \\
 \P_{\mathcal{K}} \P_{\mathcal{P}} (\X) \in& \  \mathcal{P} \cap \mathcal{K}.
\end{split}
\end{equation}
\end{lem}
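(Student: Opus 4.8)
The plan is to exploit two elementary structural facts: that the admissible sets $\mathcal{K}$ and $\mathcal{P}$ are defined purely by \emph{support} constraints that survive the zeroing of entries, and that each projection operator acts by retaining some entries of its argument unchanged and setting the rest to zero. First I would record the support-containment property. From the analytic description of the projections given just above the lemma, $\P_{\mathcal{K}}(\X)$ keeps, in each column, the $k$ entries of largest magnitude and annihilates the remaining ones, while $\P_{\mathcal{P}}(\X)$ keeps the $p$ rows of largest maximal absolute value and annihilates the rest. In either case the nonzero pattern of the output is contained in the nonzero pattern of the input, i.e. $\operatorname{supp}\bigl(\P_{\mathcal{K}}(\X)\bigr) \subseteq \operatorname{supp}(\X)$ and likewise for $\P_{\mathcal{P}}$, regardless of how ties among equal magnitudes are resolved.

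Second, I would observe that both $\mathcal{K}$ and $\mathcal{P}$ are closed under shrinking the support. If a matrix $\The$ satisfies $\operatorname{supp}(\The) \subseteq \operatorname{supp}(\The')$ for some $\The' \in \mathcal{K}$, then each column of $\The$ has at most as many nonzeros as the corresponding column of $\The'$, hence at most $k$, so $\The \in \mathcal{K}$. The identical argument with ``column'' replaced by ``row'' (and $k$ by $p$) shows the same monotonicity for $\mathcal{P}$.

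With these two facts the lemma is immediate. For the first identity, set $\mathbf{U} := \P_{\mathcal{K}}(\X)$, so that $\mathbf{U} \in \mathcal{K}$ by definition of the projection, and then $\mathbf{W} := \P_{\mathcal{P}}(\mathbf{U})$, so that $\mathbf{W} \in \mathcal{P}$ by construction. The support-containment property gives $\operatorname{supp}(\mathbf{W}) \subseteq \operatorname{supp}(\mathbf{U})$, whence the monotonicity of $\mathcal{K}$ yields $\mathbf{W} \in \mathcal{K}$, and therefore $\mathbf{W} \in \mathcal{K} \cap \mathcal{P}$. The second identity is proved by the mirror-image argument: $\P_{\mathcal{P}}(\X) \in \mathcal{P}$, applying $\P_{\mathcal{K}}$ lands in $\mathcal{K}$ while only shrinking the support, and the monotonicity of $\mathcal{P}$ preserves membership in $\mathcal{P}$.

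I do not expect a substantive obstacle, as the statement is essentially combinatorial rather than analytic. The one point needing care is the possible non-uniqueness of the projections when several entries (or rows) tie at the retention threshold; I would emphasise that both the support-containment and the cardinality bounds hold for \emph{every} admissible tie-breaking choice, so the conclusion does not depend on which minimiser $\P_{\mathcal{K}}$ or $\P_{\mathcal{P}}$ returns. The boundedness hypothesis on $\X$ enters only to guarantee that the projections are well defined, i.e. that the defining minima are attained.
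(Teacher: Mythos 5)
Your proof is correct and follows essentially the same route as the paper's: both rest on the observation that each projection only zeroes out entries (so supports shrink) and that $\mathcal{K}$ and $\mathcal{P}$ are closed under support shrinkage. Your version merely spells out these two facts more explicitly and adds the (welcome but not essential) remark that the conclusion is robust to tie-breaking in the non-unique case.
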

\begin{proof}
Projections of $\X$ onto $\mathcal{K}$ or $\mathcal{P}$ shrinks some of $\X$'s non-zero elements to zero and does not produce any further non-zero elements. This simply shows that the projection of a point in $\mathcal{K}$, onto $\mathcal{P}$, gives a new point which is still in $\mathcal{K}$. It assures the first statement. The second statement can be shown similarly.
\end{proof}
\begin{remark}
 The sets $\mathcal{K}$ and $\mathcal{P}$ are non-convex and the projection onto each of these sets may thus be non-unique. In this case we can randomly choose one of the projections.
\end{remark}

\subsection{Proposed Dictionary Selection Algorithm} 

We use a gradient based method which iteratively updates the current solution $\X^{[n]}$, in the negative gradient direction and maps onto a point in $\mathcal{K} \cap \mathcal{P}$, to approximately solve (\ref{eq:odsformulation}). If $\psi(\mathbf{X}) := \|\mathbf{Y} - \mathbf{\Phi} \mathbf{X}\|_{F}^2$, the gradient of $\psi$ can be found as follows,

\begin{equation}
\mathbf{G} := \frac{\partial}{\partial \mathbf{X}} \psi(\mathbf{X}) = 2 \mathbf{\Phi}^H \left(\mathbf{\Phi} \mathbf{X} - \mathbf{Y} \right).
\end{equation}

An important part of the gradient descent methods, is how to select the step size. An efficient step size selection technique for unconstrained quadratic minimisation problems, with objectives like $\psi(\mathbf{X})$, is to use half of the spectral radius of linear operator, here $\mathbf{\Phi}$, as follows,

\begin{equation*}
\mu = \frac{1}{2}\frac{\mathbf{G}^H\mathbf{\Phi}^H\mathbf{\Phi}\mathbf{G}}{\mathbf{G}^H\mathbf{G}}
\end{equation*}

Such a step size is optimal for the first order gradient descent minimisation of the unconstrained problem with the quadratic objective $\psi(\mathbf{X})$. In a constrained minimisation scenario, we can choose a similar initial step size and shrink the size, if the objective increases. It thus needs an extra step to check that the objective is actually not increased after each update of the parameters.
A more clever initial step size was selected in \cite{Blumensath10} for the sparse approximations of $k$-sparse signals. If the support of sparse coefficient vectors are fixed, \textit{i.e.} the overall projection steps do not change the support, the update is only in the direction which changes current non-zero coefficients. When the problem size is shrunk to the space of current support, the problem is quadratic and the step size can be similarly calculated using the gradient matrix $\mathbf{G}$, constrained to the support, as follows,

\begin{equation*}
\mu = \frac{1}{2}\frac{\mathbf{G}_{S}^H\mathbf{\Phi}^H\mathbf{\Phi}\mathbf{G}_{S}}{\mathbf{G}_{S}^H\mathbf{G}_{S}}
\end{equation*}
where $\mathbf{G}_{S} \in \mathbb{R}^{m \times L}$ is the gradient matrix $\mathbf{G}$ masked by the support of $\mathbf{X}$, $S$, as follows,

\begin{equation*}
 \{\mathbf{G}_{S}\}_{i,j} =  \begin{cases}
				\{\mathbf{G}\}_{i,j} & \{\mathbf{X}\}_{i,j} \ne 0 \\
				0	& 	\operatorname{Otherwise}
                             \end{cases}
\end{equation*}

A pseudo-code for the algorithm is presented in Algorithm \ref{alg:projGrad}. The condition which is checked in line \ref{algline:majorcheck}, guarantees that the algorithm reduces the objective by updating the coefficients. As the dictionary selection algorithm \ref{alg:projGrad} is based upon a gradient projection type technique, the learned dictionary may be more suitable for such greedy sparse approximation techniques. However, the simulation results show that the reference dictionary can be recovered using this algorithm, given a rich set of training samples. If the real signal is sparse and the dictionary satisfies the exact recovery conditions, the dictionary is thus optimal for any sparse recovery algorithms.

In the following theorem, we prove that Algorithm \ref{alg:projGrad} is numerically stable and the generated sequence has limit points.

\begin{theorem}
 Let $\X^{[0]} \in \R^{m \times L}$ be a bounded initial point. The gradient based method of Algorithm \ref{alg:projGrad}, generates a bounded sequence of solutions, which accumulate.
\end{theorem}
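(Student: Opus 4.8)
The plan is to split the statement into its two assertions --- boundedness of the iterates $\{\X^{[n]}\}$ and the existence of accumulation points --- and to note that the second follows from the first by Bolzano--Weierstrass once we are in the finite-dimensional coefficient space $\R^{n\times L}$. Essentially all the work is then in the boundedness claim, and the engine for it is the monotonicity of Algorithm \ref{alg:projGrad} in the objective $\psi(\X) := \norm{\Y - \mathbf{\Phi}\X}_F^2$ combined with the boundedness of sublevel sets supplied by Corollary \ref{corol:bounded}. Throughout I would work under the standing condition (\ref{eq:boundedcondition}), since Lemma \ref{lem:bounded} already shows the conclusion can fail without it.

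First I would extract the two properties guaranteed by the construction of the algorithm. The acceptance test in line \ref{algline:majorcheck} only commits an update that does not increase the objective, which gives $\psi(\X^{[n+1]}) \le \psi(\X^{[n]})$ for all $n$, hence $\psi(\X^{[n]}) \le \psi(\X^{[0]}) =: c < \infty$ (finite because $\X^{[0]}$ is bounded and $\psi$ is continuous). Moreover, every accepted iterate is produced by a composed projection $\P_{\mathcal{P}}\P_{\mathcal{K}}$, which by the preceding projection lemma lands in $\mathcal{K}\cap\mathcal{P}$. So from the first update onward the iterates lie in the sublevel set
\[
S_c := \{\The \in \mathcal{K}\cap\mathcal{P} : \psi(\The) \le c\}.
\]

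The key step is then to show $S_c$ is bounded. This is exactly Corollary \ref{corol:bounded} in the case $c = \norm{\Y}_F^2$, and I would reuse the proof of Lemma \ref{lem:bounded} with only cosmetic changes: for $\The \in S_c$ one has $\norm{\mathbf{\Phi}\The}_F \le \norm{\Y}_F + \sqrt{c}$, so the range component obeys $\norm{\The_{\mathcal{R}}}_F \le \sigma_{min}^{-1}(\norm{\Y}_F + \sqrt{c})$, and the subsequent bounds on $\The_{\mathcal{N}_{\bar{\Lambda}}}$ and then $\The_{\mathcal{N}_{\Lambda}}$ through Proposition \ref{prop:mindistance} go through verbatim. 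With $S_c$ bounded and $\X^{[0]}$ bounded by hypothesis, the whole sequence $\{\X^{[n]}\}_{n\ge 0}$ is bounded; Bolzano--Weierstrass in $\R^{n\times L}$ then yields a convergent subsequence, i.e. the sequence accumulates.

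I expect the main obstacle to be precisely this extension of the sublevel-set bound from the distinguished value $\norm{\Y}_F^2$ to the initialisation-dependent value $c = \psi(\X^{[0]})$: the point to verify is that the argument of Lemma \ref{lem:bounded} never used the specific number $\norm{\Y}_F^2$, only that the objective is majorised by a finite constant --- which the authors in fact flag in the remark following Corollary \ref{corol:bounded}. A smaller issue worth a sentence is that the acceptance test must be satisfiable, so that the monotone sequence is genuinely well-defined; restricted to a fixed support the problem is quadratic, so for a sufficiently small step a decrease is guaranteed, making the line search terminate and the invoked monotonicity non-vacuous.
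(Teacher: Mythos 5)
Your proposal follows essentially the same route as the paper's own proof: monotone decrease of the objective along the iterates, boundedness of the resulting sublevel set of $\mathcal{K}\cap\mathcal{P}$ via Corollary \ref{corol:bounded}, and then Bolzano--Weierstrass in the finite-dimensional coefficient space. Your two added points of care --- extending the sublevel bound from $\norm{\Y}_F^2$ to $\psi(\X^{[0]})$ for a general bounded initialisation, and noting that condition (\ref{eq:boundedcondition}) must be assumed and that the line search terminates --- are genuine refinements the paper leaves implicit, but they do not change the argument.
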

\begin{proof}
 As the algorithm reduces the objective at each iteration, the search space is a bounded subset of $\mathcal{K} \cap \mathcal{P}$, based upon Corollary \ref{corol:bounded}. $\mathcal{K} \cap \mathcal{P}$ is a closed set, the search space is then a compact subset of $\R^{m \times L}$. The sequence generated by Algorithm \ref{alg:projGrad}, lives in a compact set, which is enough to have bounded accumulation points, based on the Bolzano-Weierstrass theorem. 
\end{proof}

\begin{algorithm}[t]
\caption{ Alternating Projected Gradient for Dictionary Selection}\label{alg:projGrad}
\begin{algorithmic}[1]
 \STATE \textbf{initialisation:} $\mathbf{X}^{[0]}$, $S = \operatorname{supp}\left(\mathcal{P}_{\mathcal{K}}\left( \mathcal{P}_{\mathcal{P}}\left( \mathbf{\Phi}^H\mathbf{Y}\right)\right)\right)$, $\rho < 1$, $\beta < 1$, $\epsilon \ll 1$, $t = 0$, $K \ge 1$ and $i = 0$
 \WHILE {$i < K \ \& \ t \ne 1$}
 \STATE $\mathbf{G} = 2 \mathbf{\Phi}^H \left(\mathbf{\Phi} \mathbf{X}^{[i]} - \mathbf{Y} \right)$
 \STATE $\mu = \frac{1}{2}\frac{\mathbf{G}_{S}^H\mathbf{\Phi}^H\mathbf{\Phi}\mathbf{G}_{S}}{\mathbf{G}_{S}^H\mathbf{G}_{S}}$
 \STATE $\mathbf{Z} = \mathcal{P}_{\mathcal{K}}\left( \mathcal{P}_{\mathcal{P}}\left( \mathbf{X}^{[i]} - \mu \mathbf{G}\right)\right)$
 \IF {$\|\mathbf{X}^{[i]}- \mathbf{Z}\|_{\mathsmaller{F}}^2 < \epsilon$} \STATE $t = 1$ \ENDIF
 \IF {$t \ne 1$}
\WHILE {$\mu > \frac{\rho}{2} \frac{\|\mathbf{X}^{[i]}- \mathbf{Z}\|_{\mathsmaller{F}}^2}{\|\mathbf{\Phi} \left(\mathbf{X}^{[i]}- \mathbf{Z}\right)\|_{\mathsmaller{F}}^2} $} \label{algline:majorcheck}
\STATE $\mu = \beta . \mu$
\STATE $\mathbf{Z} = \mathcal{P}_{\mathcal{K}}\left( \mathcal{P}_{\mathcal{P}}\left( \mathbf{X}^{[i]} - \mu \mathbf{G}\right)\right)$
\ENDWHILE
\ENDIF
\STATE $i = i+1$
\STATE $\mathbf{X}^{[i]} = \mathbf{Z}$
\STATE $S = \operatorname{supp}\left(\mathbf{X}^{[i]}\right)$
 \ENDWHILE
\STATE $\mathbf{X}^* = \mathbf{X}^{[i-1]}$
 \STATE \textbf{output:} $\mathbf{X}^*$
\end{algorithmic}
\end{algorithm}

\begin{figure}[t]
\centering
\includegraphics[width=8.5 cm]{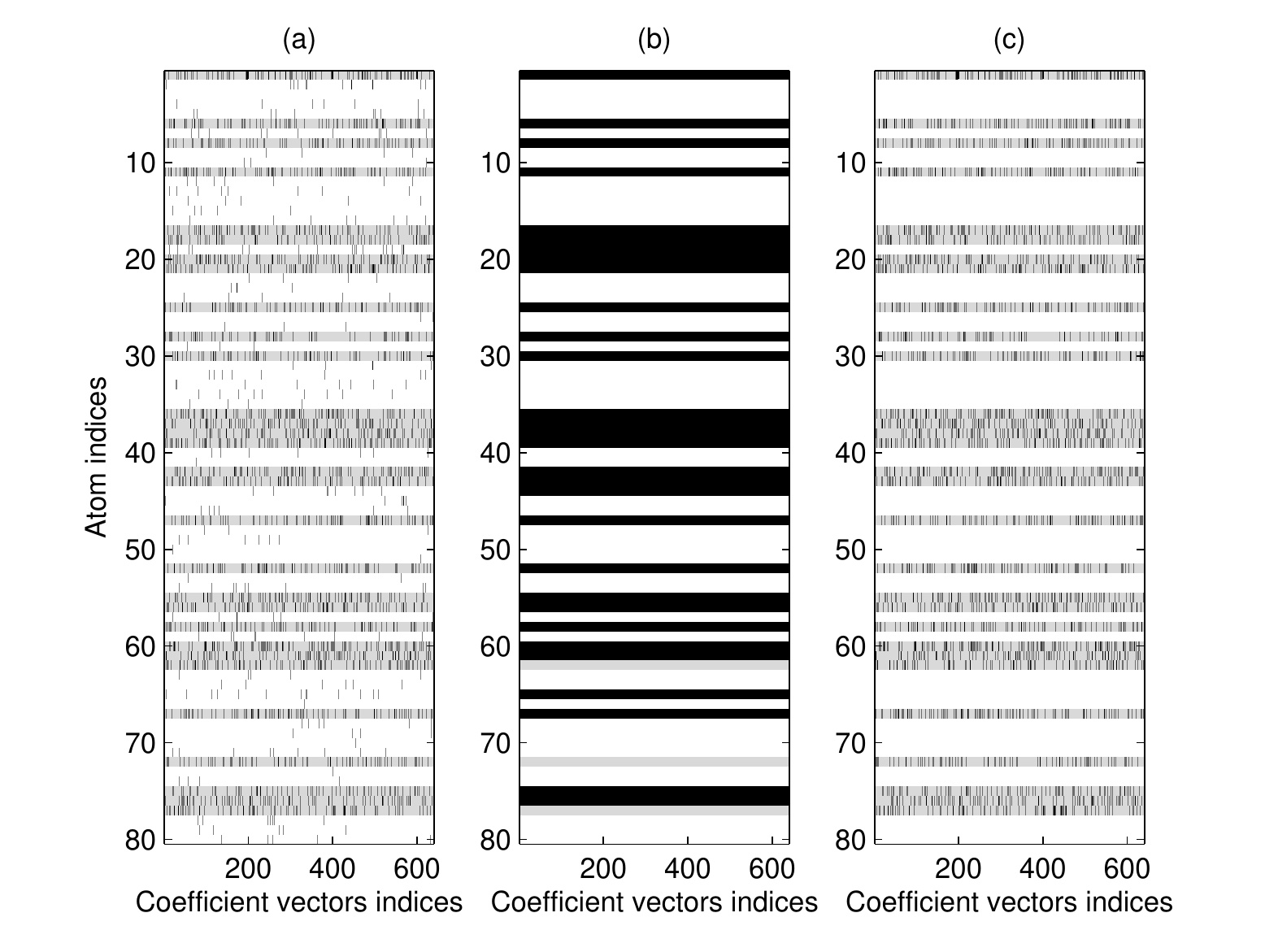}
\caption{Dictionary selection results using, (a) $\mathcal{K}$, (b) $\mathcal{P}$ and (c) $\mathcal{K}\cap \mathcal{P}$ as admissible sets. The black dots in each plot indicate non-zero coefficients. In plot (b), as dots are very populated, we observe solid horizontal lines. Gray horizontal lines are plotted as a guideline, for the correct dictionary.}
\label{3KPSparseComp}
\end{figure}

\begin{figure}[t]
\centering
\includegraphics[width=8.5 cm]{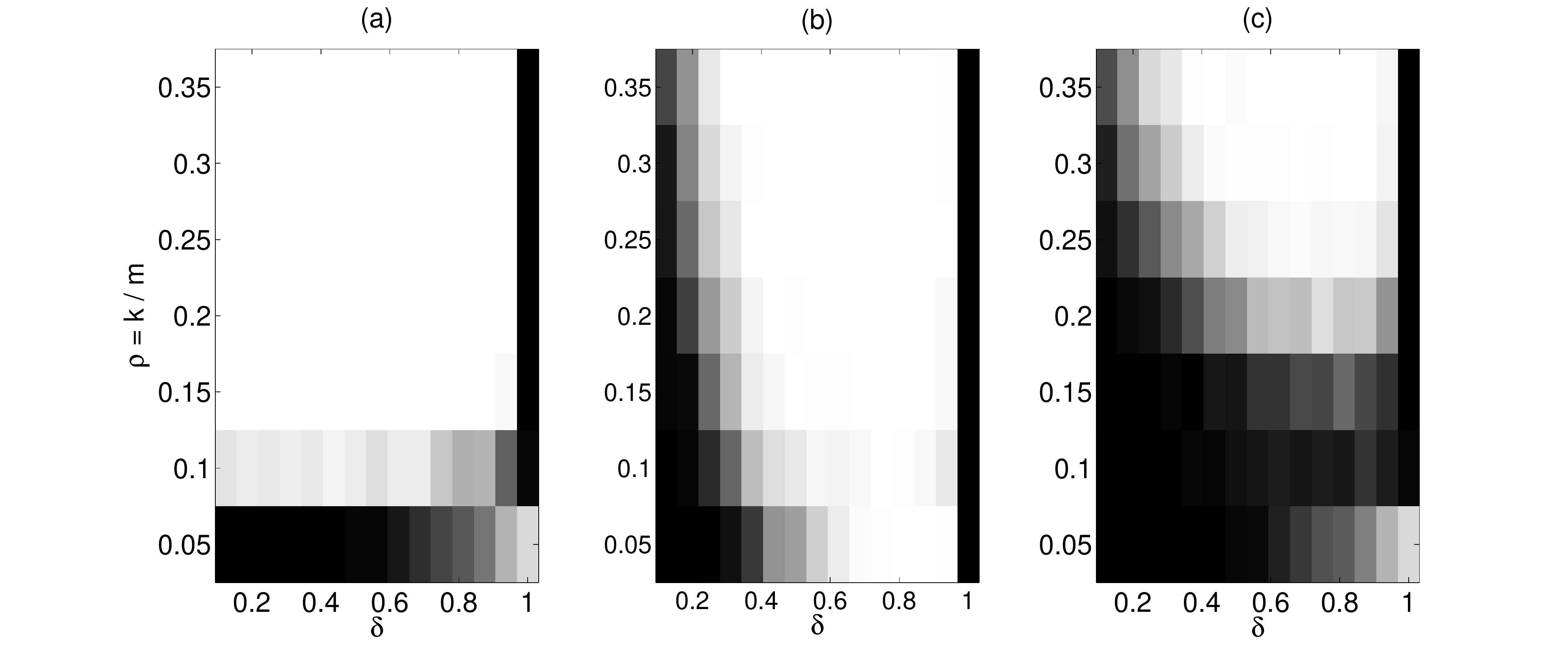}
\caption{Phase transition using, (a) $\mathcal{K}$, (b) $\mathcal{P}$ and (c) $\mathcal{K}\cap \mathcal{P}$ as admissible sets. The black area indicates successful recovery of the dictionary.}
\label{pt3}
\end{figure}

\section{Simulations} \label{sec:simulations}

In the first experiment, a dictionary $\mathbf{\Phi} \in \mathbb{R}^{20 \times 80}$ was randomly generated using a normal zero mean distribution with unit variance and normalised to have unit $\ell_2$-norm on each column. The target dictionary $\mathbf{D} \in \mathbb{R}^{20 \times 30}$ was generated by randomly selecting $p = 30$ atoms of $\mathbf{\Phi}$.
A number $L = 320$ of $k$-sparse coefficient vectors (with $k = 4$), were generated by randomly selecting the support, with a uniform distribution of the magnitudes in $[0.2,1]$ and random signs. 
A set of training matrix $\mathbf{Y}$ of length $L$ were generated using the generative model and randomly generated sparse vectors. To recover the reference dictionary $\mathbf{D}$, given $\mathbf{Y}$, $p$ and $k$, we used a gradient descent based algorithm similar to Algorithm \ref{alg:projGrad}, with three different admissible sets, and demonstrate the superiority of the proposed technique. We first used $\mathcal{K}$ from (\ref{eq:ksparse}) and no constraint on the row-sparsity of the coefficient matrix $\mathbf{X}$ and showed the recovered support of the sparse matrix in the left panel (a) of Figure \ref{3KPSparseComp}. If we only assume joint sparsity model and use $\mathcal{P}$ from (\ref{eq:nsparse}) as the admissible set, we find the coefficient matrix whose support is shown in the middle panel (b) of the same figure. Using both constraint sets, as explained in Algorithm \ref{alg:projGrad}, provides a coefficient matrix whose support is shown in the right panel. The correct $\mathcal{J}$ is 
shown in these plots using some \emph{grey 
lines}. It is clear that the proposed projected gradient onto both sets can correctly recover $\mathcal{J}$, where the other two methods have some errors in the recovery.

This experiment can be repeated for different $\delta = \frac{p}{n}$ and $\rho = \frac{k}{m}$ by selection a range of $p$ and $k$'s, while keeping $m$ and $n$ fixed. If we repeat the simulations 100 times for each setting and calculate the average exact dictionary recovery, we can plot the phase transition for each methods. We have plotted such phase transitions in Figure \ref{pt3}, with $k$ sparsity constraint in (a), $p$ joint sparsity constraint in (b) and proposed constraint in (C). The black colour means high exact dictionary recovery. The area with exact recovery in (c) is larger than the same areas in (a) and (b) added together. This clearly demonstrates the relevance of the new framework.
 
In the next set of experiments, we will select a subset of the Curvelet \cite{Candes06c} dictionary for the sparse representation of fingerprints. We chose a Curvelet transform for the image size $64$ by $64$. The mother dictionary $\mathbf{\Phi} \in \R^{4096 \times 10521}$ is roughly $2.59$ times overcomplete, which we want to shrink to half size, \textit{i.e.} $\D \in \R^{4096 \times 5260}$. This is indeed a large scale dictionary learning problem, which is difficult to solve in a standard dictionary learning setting. With the help of the proposed method, we can handle such a big dictionary selection process, as we need fewer training samples, only need to keep a sparse matrix, \textit{i.e.} sparse representation matrix, in the memory and use the fact that the mother dictionary has a fast implementation. We assume the sparsity of each image patch is $k = 1052 \approx 0.1 N$ and $L = 64$. We used two different settings here to choose the dictionary, a) $p$-joint sparsity model and b) $(k,p)$-overcomplete 
joint sparse model. The simulations were done in the Matlab environment, on a 12-core, 2.6 GHz linux machine, which respectively took 72 and 90 seconds to learn $\D_p$ and $\D_{(k,p)}$. Another fingerprint image was used to test the selected dictionaries. The original image and the $k$ sparse representation of the original image with $\mathbf{\Phi}$ are shown in the first row of Figure \ref{FingerCurves4}. The $k$-sparse representation with the learned $\D$'s are shown in the second row of this figure. The left image is the representation with learned $\D_{p}$, when the model was $p$-joint sparse and the right image is the same, but with $\D_{(k,p)}$, where the $(k,p)$-overcomplete joint sparse model was incorporated. As we can see the PSNR of the representation with the shrunk dictionary $\D_{(k,p)}$ is slightly better than the other. We can also see the bottom-right quarter of these images in Figure \ref{FingerCurvesComp4}, in the same order. The sparse reconstructed images are actually denoised and the 
reconstructed image using $\D_{(k,p)}$ is more similar than $\D_{p}$ to the image reconstructed using $\mathbf{\Phi}$. 

We setup a new experiment with audio signals to demonstrate the performance of the proposed dictionary selection algorithm in comparison with the fixed dictionaries and another dictionary learning method. To this end, we used some recorded audio data from BBC radio 3 (mostly classical musics), and down-sampled the signals at a sampling rate of 32 kHz, as there is very little energy above 32 kHz. We randomly selected a $\Y \in \R^{1025 \times 8196}$ from more than eight hours of recorded audio. A three times overcomplete mother dictionary was generated using a two times frequency oversampled DCT plus the Delta Dirac transform, \textit{i.e.} identity matrix. The reason for such a selection is to incorporate the temporal and harmonic properties of the audio. There has been a question on how useful can be to combine such dictionaries and how many DCT atoms are necessary. We thus found a subset, \textit{i.e.} $p = \frac{3}{2}*1024$, of the mother atoms. If we run the proposed dictionary selection algorithm with $k = 128$, for $K = 1000$ iterations, and plot the frequency of appearance of the mother atoms in $\X$, we get the plot of Figure \ref{AudioDicoIndex}. The low-frequency DCT atoms have been used most, while high-frequency DCT atoms have not been selected in $\D$. Although there is no regular pattern for the selected delta Diracs, it is clear that some Dirac atoms close to the boundary of the window have been selected, \textit{i.e.} close to the atom indices 2048 and 3072. 
If we plot the $\ell_2$ errors of representing a set of test data $\widehat{\Y}$, which is randomly selected from the same audio database, through out the IHT iterations, we get the plots of Figure \ref{AudioSRCompSD}. The $\ell_2$ errors corresponding to using the mother and two times overcomplete DCT dictionaries, are also shown for the reference with solid and dash-doted lines, respectively. The final SNR using the selected dictionary is slightly worse than by using the mother dictionary, but is significantly better than by using the two times overcomplete DCT. For a comparison, we also ran the sparse dictionary learning \cite{Yaghoobi09b}, with the same training data samples. The reason for selecting this dictionary learning algorithm is that it has some similarities with the proposed framework here, and it provides a relatively fast dictionary, \textit{i.e.} an extra sparse matrix-vector multiplication is also necessary. To learn a dictionary for this signal size, using the canonical dictionary learning algorithms, \textit{e.g.} K-SVD, MOD and MMDL \cite{Rubinstein10}, needs many more training samples and the computational time is very high. In the sparse dictionary learning, we used the same mother dictionary we used earlier, the objective multipliers $\lambda = \gamma = .01$ and ran the simulations for 1000 iterations. The $\ell_2$ errors of using the sparse learned dictionary is shown by the dotted line in Figure \ref{AudioSRCompSD}. The final SNR is not as good as when 
we use other dictionaries.

One aim of the proposed dictionary selection method is to provide a fast dictionary. For this reason, we measured the average calculation time of the forward and backward applications of $\D$ on a 2.6GHz Intel Xeon processor machine. The application of $\D$ and $\D^\mathsmaller{T}$ as some fast operators, respectively takes, 230 and 124 ns. If Implementing the same computation as matrix-vector multiplication with the same dictionary and input data, takes 288 and 285 ns, respectively. This shows that, using a fast selected dictionary, speeds up the practical sparse approximation algorithms, as applying the dictionary and its transposed, often are the most computationally expensive parts of such algorithms.

\begin{figure}[t]
\centering
\includegraphics[width=8.5 cm]{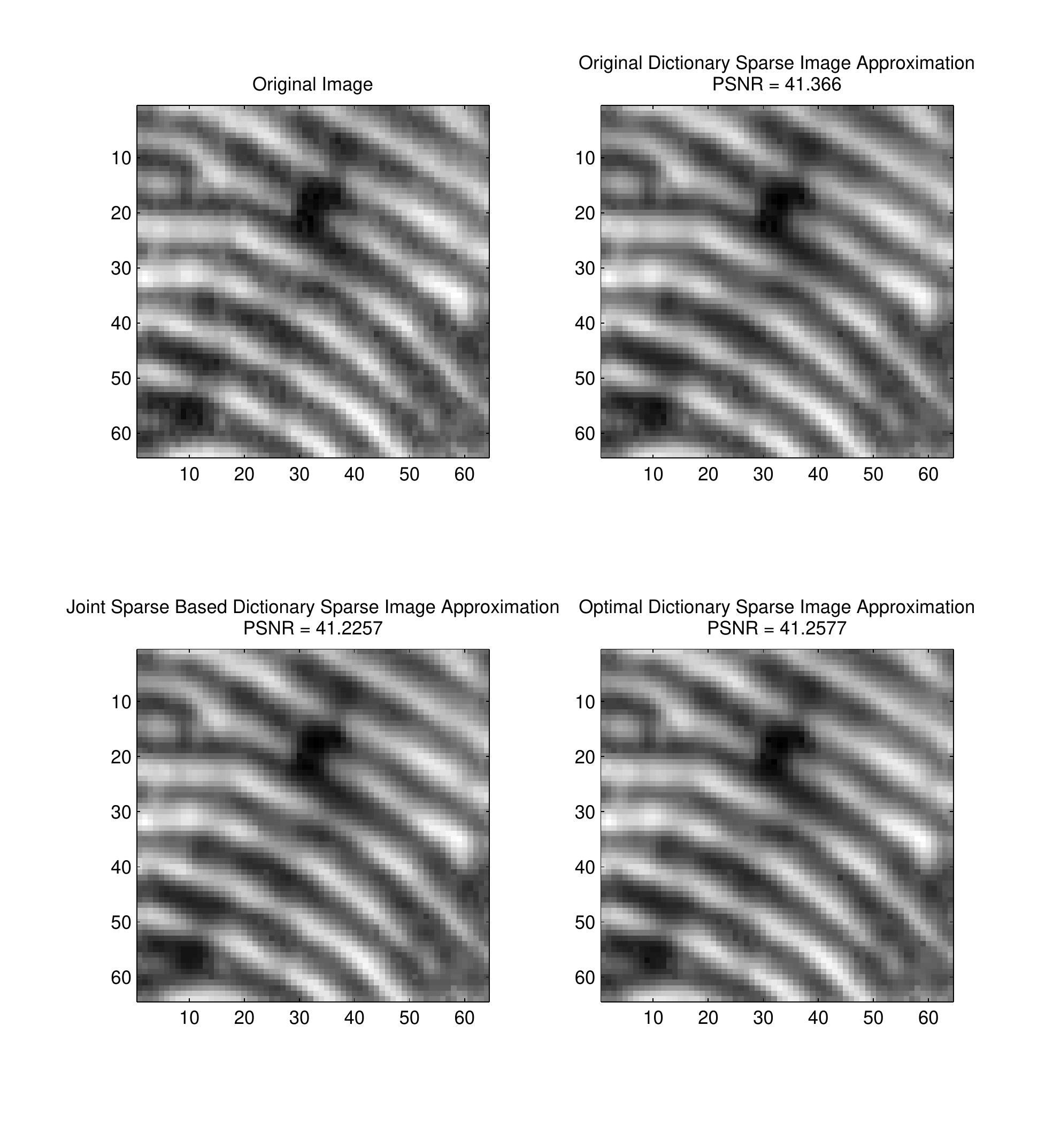}
\caption{The original image (top left), the $k$ sparse representation of the original image with the dictionaries, $\mathbf{\Phi}$ (top right), $\D_{p}$ (bottom left) and $\D_{(k,p)}$ (bottom right).}
\label{FingerCurves4}
\end{figure}

\begin{figure}[t]
\centering
\includegraphics[width=8.5 cm]{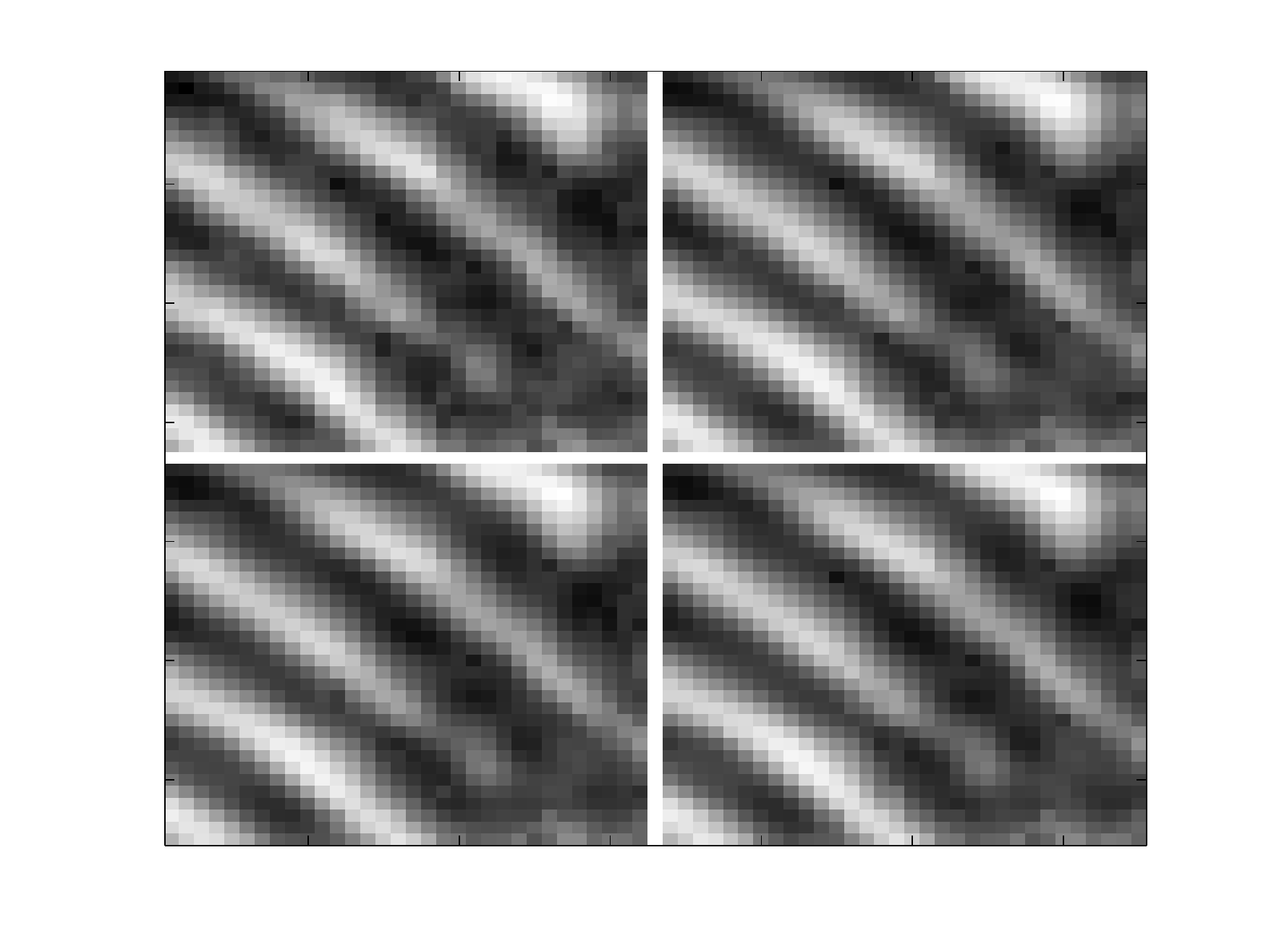}
\caption{The bottom-right quarter of the images shown in Figure \ref{FingerCurves4}, in the same order.}
\label{FingerCurvesComp4}
\end{figure}

\begin{figure}[t]
\centering
\includegraphics[width=8.5 cm]{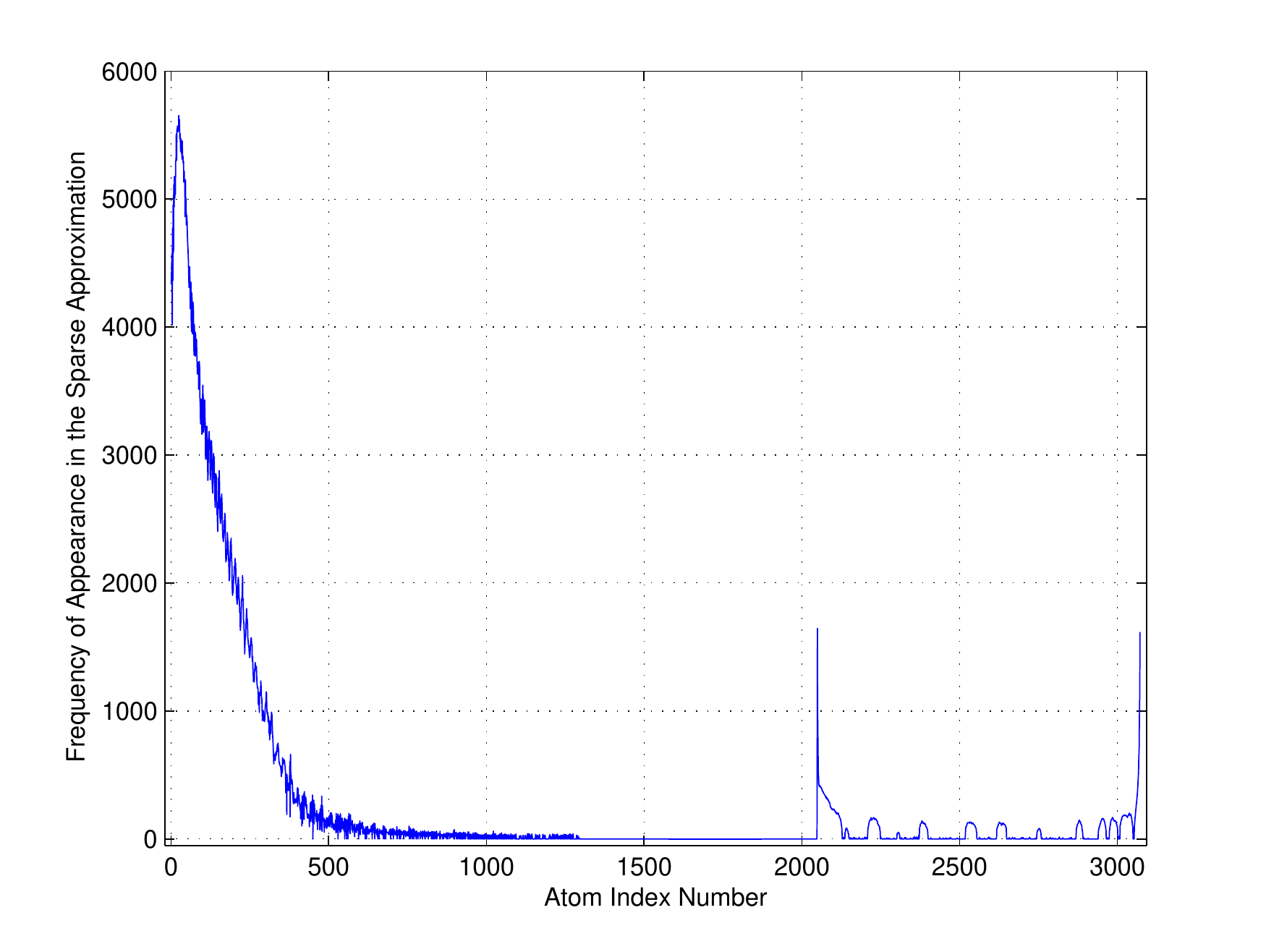}
\caption{The frequency of selected atoms, per 8192 trials. The first 2048 atoms are the two times frequency oversampled DCT and the last 1024 atoms are Dirac functions.}
\label{AudioDicoIndex}
\end{figure}

\begin{figure}[t]
\centering
\includegraphics[width=8.5 cm]{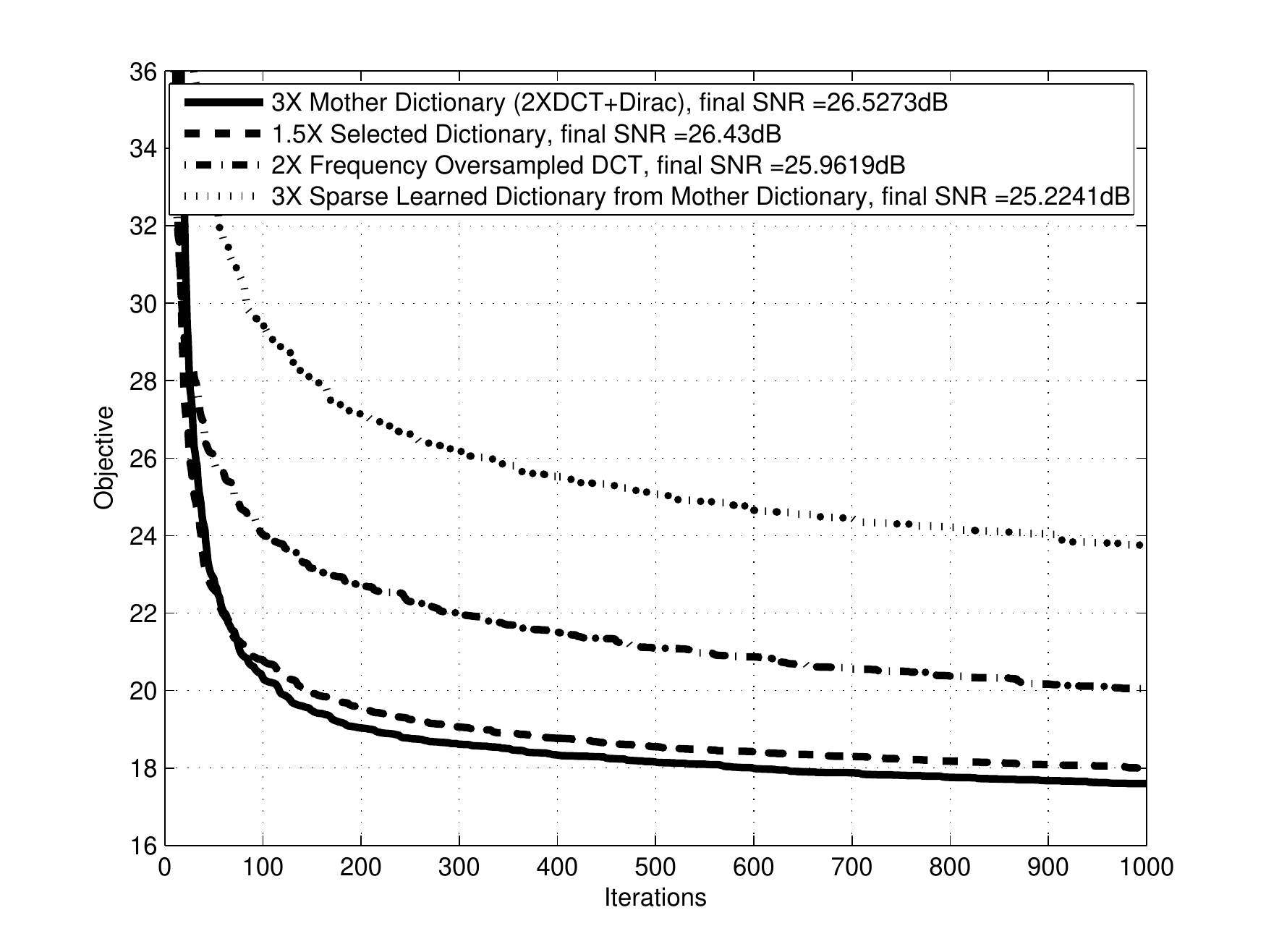}
\caption{The $\ell_2$ norm error of representations of 8192 testing trials, using Normalised IHT and different dictionaries. The dictionaries are: (a) three times overcomplete DCT+Dirac mother dictionary, (b) the one and a half times overcomplete selected dictionary, (c) a two times frequency oversampled DCT and (d) the leaned sparse dictionary using the mother dictionary of (a).}
\label{AudioSRCompSD}
\end{figure}

\section{Summary and Future work} \label{sec:conclusion}

We presented a new technique for dictionary selection for the linear sparse representation, when a collection of possibly suitable atoms and some exemplar signals are available. The dictionary selection problem is reformulated as a more general form of the joint sparse approximation problem, when the number of active locations in sparse coefficients is larger than the size of signal space. As such overcomplete joint sparsity framework has generally infinitely many solutions, the sparsity within the active set helps to regularise the problem. It was shown that the overcomplete joint sparse approximation problem is well-defined under some conditions on the null-space of the matrix generated by the given large set of atoms (mother dictionary). As the objective of the introduced program is continuously differentiable, we used a gradient mapping technique to approximately solve the problem. The introduced algorithm converges in a weak sense (convergence to a bounded non-empty set). 

We presented some synthetic data simulation result to support this hypothesis that the introduced algorithm can recover the original dictionary. The phase plot of the dictionary recovery is compared with two other cases, when we use other sparsity models, namely $k$-sparse and $p$-joint sparse model. As the simulations with synthetic data were promising, we also did some simulations to select a subset of a commonly used dictionary, {\em Curvelet} and Overcomplete DCT+Dirac dictionaries, to reduce the complexity of the sparse coding algorithm. The size of dictionary learning problem is such that it cannot be handled by the vast majority of current dictionary learning algorithms. As we do not need to keep the dictionary in the memory and as the dictionary-vector multiplications can be implemented efficiently, the learning in the new framework is relatively easy. The results show that we can roughly get the same image/audio quality for a specific class of image/audio signals, when we use a smaller dictionary 
than the mother dictionary. 

The new overcomplete joint sparsity model seems an interesting extension of the previously investigated joint sparsity model. We have left the theoretical investigation of exact recovery and other sparse signal processing applications, for the future work.

\bibliographystyle{IEEEtran}
\bibliography{/home/s0574225/Dropbox/Reports/Biblio/biblio.bib}
\end{document}